\newcommand{\Wdim}{{k_W}}
\newcommand{\Ydim}{{k_Y}}
\newcommand{\symfun}{c}
\newcommand{\pruner}{s}
\newcommand{\paramP}{{\bphi}}
\newcommand{\paramE}{{\boldsymbol{\psi}}}
\newcommand{\digit}[1]{\vcenter{\hbox{\includegraphics[height=10pt]{#1}}}}
\theoremstyle{plain}
\newtheorem{theorem}{Theorem}[section]
\newtheorem{proposition}[theorem]{Proposition}
\theoremstyle{definition}
\theoremstyle{remark}
\title{\textsc{A-NeSI}: A Scalable Approximate Method \\ for Probabilistic Neurosymbolic Inference}
\author{Emile van Krieken \\
University of Edinburgh \\ Vrije Universiteit Amsterdam \\
\texttt{Emile.van.Krieken@ed.ac.uk}
\And Thiviyan Thanapalasingam \\
University of Amsterdam \\
\And Jakub M. Tomczak \\ Eindhoven University of Technology 
\And Frank van Harmelen, Annette ten Teije \\ Vrije Universiteit Amsterdam 
}
\begin{document}
\maketitle

\begin{abstract}
We study the problem of combining neural networks with symbolic reasoning. Recently introduced frameworks for Probabilistic Neurosymbolic Learning (PNL), such as DeepProbLog, perform exponential-time exact inference, limiting the scalability of PNL solutions. We introduce \emph{Approximate Neurosymbolic Inference} (\textsc{A-NeSI}): a new framework for PNL that uses neural networks for scalable approximate inference. \textsc{A-NeSI} 1) performs approximate inference in polynomial time without changing the semantics of probabilistic logics; 2) is trained using data generated by the background knowledge; 3) can generate symbolic explanations of predictions; and 4) can guarantee the satisfaction of logical constraints at test time, which is vital in safety-critical applications. Our experiments show that \textsc{A-NeSI} is the first end-to-end method to solve three neurosymbolic tasks with exponential combinatorial scaling. Finally, our experiments show that \textsc{A-NeSI} achieves explainability and safety without a penalty in performance.
\end{abstract}

\section{Introduction}
Recent work in neurosymbolic learning combines neural perception with symbolic reasoning \cite{vanharmelenBoxologyDesignPatterns2019,marraStatisticalRelationalNeural2021}, using symbolic knowledge to constrain the neural network \cite{giunchigliaDeepLearningLogical2022}, to learn perception from weak supervision signals \cite{manhaeveNeuralProbabilisticLogic2021}, and to improve data efficiency \cite{badreddineLogicTensorNetworks2022,xuSemanticLossFunction2018}. 
Many neurosymbolic methods 
use a differentiable logic such as fuzzy logics \cite{badreddineLogicTensorNetworks2022,diligentiSemanticbasedRegularizationLearning2017} or probabilistic logics \cite{manhaeveDeepProbLogNeuralProbabilistic2018,xuSemanticLossFunction2018,deraedtNeurosymbolicNeuralLogical2019}. 
We call the latter \emph{Probabilistic Neurosymbolic Learning (PNL)} methods. PNL methods add probabilities over discrete truth values to maintain all logical equivalences from classical logic, unlike fuzzy logics \cite{vankriekenAnalyzingDifferentiableFuzzy2022}. However, performing inference requires solving the \emph{weighted model counting (WMC)} problem, which
is computationally exponential \cite{chaviraProbabilisticInferenceWeighted2008}, significantly limiting the kind of tasks that PNL can solve. 

We study how to scale PNL to exponentially complex tasks using deep generative modelling \cite{tomczakDeepGenerativeModeling2022}. Our method called \emph{Approximate Neurosymbolic Inference} (\textsc{A-NeSI}), introduces two neural networks that perform approximate inference over the WMC problem. The \emph{prediction model} predicts the output of the system, while the \emph{explanation model} computes which worlds (i.e. which truth assignment to a set of logical symbols) best explain a prediction. We use a novel training algorithm to fit both models with data generated using background knowledge: \textsc{A-NeSI} samples symbol probabilities from a prior and uses the symbolic background knowledge to compute likely outputs given these probabilities. We train both models on these samples. See Figure \ref{fig:framework} for an overview.

\begin{figure*}
	\includegraphics[width=\textwidth]{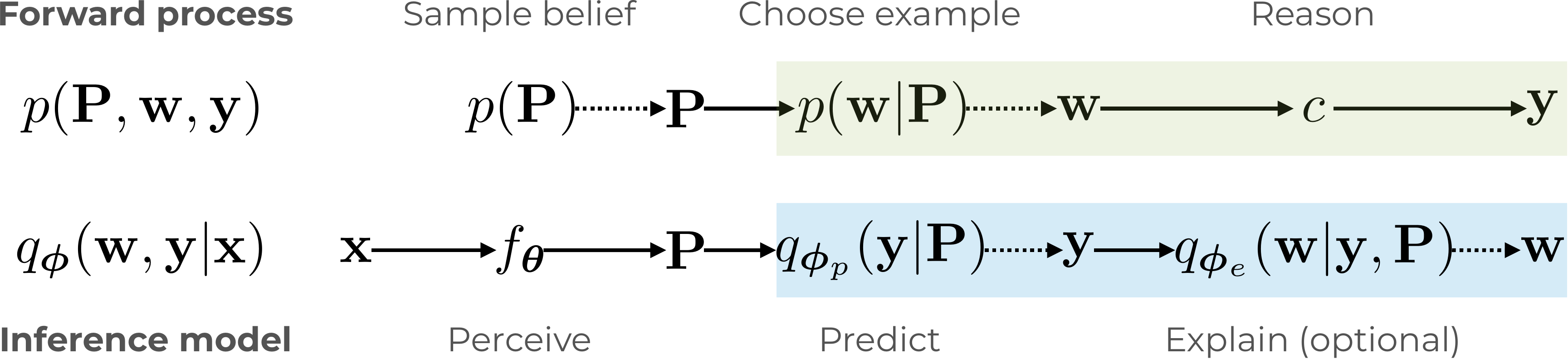}
	\caption{Overview of \textsc{A-NeSI}. The forward process samples a belief $\bP$ from a prior, then chooses a world $\bw$ for that belief. The symbolic function $c$ computes its output $\by$. The inference model uses the perception model $f_\btheta$ to find a belief $\bP$, then uses the prediction model $q_\paramP$ to find the most likely output $\by$ for that belief. If we also use the optional explanation model, then $q_\paramE$ explains the output.}
	\label{fig:framework}
\end{figure*}

\textsc{A-NeSI} combines all benefits of neurosymbolic learning with scalability. Our experiments on the Multi-digit MNISTAdd problem \cite{manhaeveDeepProbLogNeuralProbabilistic2018} show that, unlike other approaches, \textsc{A-NeSI} scales almost linearly in the number of digits and solves MNISTAdd problems with up to 15 digits while maintaining the predictive performance of exact inference. Furthermore, it can produce explanations of predictions and guarantee the satisfaction of logical constraints using a novel symbolic pruner.

The paper is organized as follows.
In Section \ref{sec:a-nesi}, we introduce \textsc{A-NeSI}. Section \ref{sec:inference-models} presents scalable neural networks for approximate inference. Section \ref{sec:train-inference} outlines a novel training algorithm using data generated by the background knowledge. Section \ref{sec:joint} extends \textsc{A-NeSI} to include an explanation model. Section \ref{sec:symbolic-pruning} extends \textsc{A-NeSI} to guarantee the satisfaction of logical formulas.
In Section \ref{sec:experiments}, we perform experiments on three Neurosymbolic tasks that require perception and reasoning. Our experiments on Multi-digit MNISTAdd show that \textsc{A-NeSI} learns to predict sums of two numbers with 15 handwritten digits, up from 4 in competing systems. Similarly, \textsc{A-NeSI} can classify Sudoku puzzles in $9\times 9$ grids, up from 4, and find the shortest path in $30\times 30$ grids, up from 12. 

\section{Problem setup}
\label{sec:problem-statement}
First, we introduce our inference problem. We will use the MNISTAdd task from \cite{manhaeveDeepProbLogNeuralProbabilistic2018} as a running example. 
In this problem, we must learn to recognize the sum of two MNIST digits using only the sum as a training label. Importantly, we do not provide the labels of the individual MNIST digits.

\subsection{Problem components}
We introduce four sets representing the spaces of the variables of interest.
\begin{enumerate}[leftmargin=*]
	\item $X$ is an input space. In MNISTAdd, this is the pair of MNIST digits $\bx=(\digit{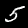}, \digit{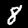})$.
	\item $W$ is a structured space of $\Wdim$ different discrete variables $w_i$, each with their own domain. Its elements $\bw\in W=W_1\times ... \times W_\Wdim$ are \emph{worlds} or \emph{concepts} \citep{yehCompletenessawareConceptBasedExplanations2022} of some $\bx\in X$. For $(\digit{images/mnist_5.png}, \digit{images/mnist_8.png})$, the correct world is $\bw=(5, 8)\in \{0, ..., 9\}^2$. 
	\item $Y$ is a structured space of $\Ydim$ discrete variables $y_i$. Elements $\by\in Y=Y_1 \times ... \times Y_\Ydim$ represent the output of the neurosymbolic system. Given world $\bw=(5, 8)$, the sum is $13$. We decompose the sum into individual digits, so $\by=(1, 3)\in \{0, 1\}\times \{0, ..., 9\}$. 
	\item $\bP\in \Delta^{|W_1|}\times ... \times \Delta^ {|W_\Wdim|}$, where each $\Delta^{|W_i|}$ is the probability simplex over the options of the variable $w_i$, is a \emph{belief}. $\bP$ assigns probabilities to different worlds with $p(\bw|\bP)=\prod_{i=1}^\Wdim \bP_{i, w_i}$. That is, $\bP$ is a parameter for an independent categorical distribution over the $\Wdim$ choices.
\end{enumerate}
We assume access to some \emph{symbolic reasoning function} $\symfun: W\rightarrow Y$ that deterministically computes the (structured) output $\by$ for any world $\bw$. This function captures our background knowledge of the problem and we do not impose any constraints on its form. For MNISTAdd, $\symfun$ takes the two digits $(5, 8)$, sums them, and decomposes the sum by digit to form $(1, 3)$.

\subsection{Weighted Model Counting}
\label{sec:weighted_model_counting}
Together, these components form the \emph{Weighted Model Counting (WMC)} problem \cite{chaviraProbabilisticInferenceWeighted2008}: 
\begin{equation}
	\label{eq:wmc}
	p(\by|\bP)=\mathbb{E}_{p(\bw|\bP)}[\mathbbm{1}_{\symfun(\bw)=\by}]
\end{equation}
The WMC counts the \emph{possible} worlds $\bw$\footnote{Possible worlds are also called `models'. We refrain from using this term to prevent confusion with `(neural network) models'.} for which $\symfun$ produces the output $\by$, and weights each possible world $\bw$ with $p(\bw|\bP)$. 
In PNL, we want to train a perception model $f_\btheta$ to compute a belief $\bP = f_\btheta(x)$ for an input $\bx\in X$ in which the correct world is likely. Note that \emph{possible} worlds are not necessarily \emph{correct} worlds: $\bw=(4, 9)$ also sums to 13, but is not a symbolic representation of $\bx=(\digit{images/mnist_5.png}, \digit{images/mnist_8.png})$.

Given this setup, we are interested in efficiently computing the following quantities:
\begin{enumerate}[leftmargin=*]
	\item $p(\by|\bP=f_\btheta(\bx))$: We want to find the most likely outputs for the belief $\bP$ that the perception network computes on the input $\bx$.
	\item $\nabla_\bP p(\by|\bP=f_\btheta(\bx))$: We want to train our neural network $f_\btheta$, which requires computing the gradient of the WMC problem. 
	\item $p(\bw|\by, \bP=f_\btheta(\bx))$: We want to find likely worlds given a predicted output and a belief about the perceived digits. The probabilistic logic programming literature calls $\bw^*=\arg\max_{\bw} p(\bw|\by, \bP)$ the \emph{most probable explanation (MPE)}  \cite{shterionovMostProbableExplanation2015}.
\end{enumerate}

\subsection{The problem with exact inference for WMC}
\label{sec:why-approximate}
The three quantities introduced in Section \ref{sec:weighted_model_counting} require calculating or estimating the WMC problem of Equation \ref{eq:wmc}. However, the exact computation of the WMC is \#P-hard, which is above NP-hard.
Thus, we would need to do a potentially exponential-time computation for each training iteration and test query. Existing PNL methods use probabilistic circuits (PCs) to speed up this computation \cite{yoojungProbabilisticCircuitsUnifying2020,kisaProbabilisticSententialDecision2014,manhaeveDeepProbLogNeuralProbabilistic2018,xuSemanticLossFunction2018}. PCs compile a logical formula into a circuit for which many inference queries are linear in the size of the circuit. PCs are a good choice when exact inference is required but do not overcome the inherent exponential complexity of the problem: the compilation step is potentially exponential in time and memory, and there are no guarantees the size of the circuit is not exponential. 

The Multi-digit MNISTAdd task is excellent for studying the scaling properties of WMC. We can increase the complexity of MNISTAdd exponentially by considering not only the sum of two single digits (called $N=1$) but the sum of two numbers with multiple digits. An example of $N=2$ would be $\digit{images/mnist_5.png}\digit{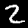}+\digit{images/mnist_8.png}\digit{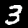}=135$. There are 64 ways to sum to 135 using two numbers with two digits. Contrast this to summing two digits: there are only 6 ways to sum to $13$. Each increase of $N$ leads to 10 times more options to consider for exact inference. Our experiments in Section \ref{sec:experiments} will show that approximate inference can solve this problem up to $N=15$. Solving this using exact inference would require enumerating around $10^{15}$ options for each query. 

\section{\textsc{A-NeSI}: Approximate Neurosymbolic Inference}
\label{sec:a-nesi}
Our goal is to reduce the inference complexity of PNL to allow training neurosymbolic models on much more complex problems. To this end, in the following subsections, we introduce \emph{Approximate Neurosymbolic Inference} (\textsc{A-NeSI}). \textsc{A-NeSI} approximates the three quantities of interest from Section \ref{sec:problem-statement}, namely $p(\by|\bP)$, $\nabla_\bP p(\by|\bP)$ and $p(\bw|\by, \bP)$, using neural networks. We give an overview of our method in Figure \ref{fig:framework}.

\subsection{Inference models}
\label{sec:inference-models}
\textsc{A-NeSI} uses an \emph{inference model} $q_{\paramP, \paramE}$ defined as 
\begin{equation}
	\label{eq:reverse}
	q_{\paramP, \paramE}(\bw, \by|\bP) = q_\paramP(\by|\bP) q_\paramE(\bw|\by, \bP).
\end{equation}
We call $q_\paramP(\by|\bP)$ the \emph{prediction model} and $q_\paramE(\bw|\by, \bP)$ the \emph{explanation model}. The prediction model should approximate the WMC problem of Equation \ref{eq:wmc}, while the explanation model should predict likely worlds $\bw$ given outputs $\by$ and beliefs $\bP$. One way to model $q_{\paramP, \paramE}$ is by considering $W$ and $Y$ as two sequences and defining an autoregressive generative model over these sequences:
\begin{align}
		q_\bphi(\by|\bP) = \prod_{i=1}^{\Ydim} q_\paramP(y_i|\by_{1:i-1}, \bP), \quad 
		q_\paramE(\bw|\by, \bP) = \prod_{i=1}^{\Wdim} q_\paramE(w_i|\by, \bw_{1:i-1}, \bP)	
	\label{eq:nim}
\end{align}
This factorization makes the inference model highly flexible. We can use simpler models if we know the dependencies between variables in $W$ and $Y$. The factorization is computationally linear in $\Wdim + \Ydim$, instead of exponential in $\Wdim$ for exact inference.
During testing, we use a beam search to find the most likely prediction from $q_{\paramP, \paramE}(\bw, \by|\bP)$. If the method successfully trained the perception model, then its entropy is low and a beam search should easily find the most likely prediction \citep{manhaeveApproximateInferenceNeural2021}.



There are no restrictions on the architecture of the inference model $q_\bphi$. Any neural network with appropriate inductive biases and parameter sharing to speed up training can be chosen. For instance, CNNs over grids of variables, graph neural networks \cite{kipfSemiSupervisedClassificationGraph2017,schlichtkrullModelingRelationalData2018} for reasoning problems on graphs, or transformers for sets or sequences \cite{vaswaniAttentionAllYou2017}.

We use the prediction model to train the perception model $f_\btheta$ given a dataset $\mathcal{D}$ of tuples $(\bx, \by)$. Our novel loss function trains the perception model by backpropagating through the prediction model:
\begin{equation}
	\mathcal{L}_{Perc}(\mathcal{D}, \btheta)=-\log q_\paramP(\by|\bP=f_\btheta(\bx)), \quad \bx, \by \sim \mathcal{D}
\end{equation}
The gradients of this loss are biased due to the error in the approximation $q_\paramP$, but it has no variance outside of sampling from the training dataset. 

\subsection{Training the inference model}
\label{sec:train-inference}
We define two variations of our method. The \emph{prediction-only} variant (Section \ref{sec:pred-only}) uses only the prediction model $q_\paramP(\by|\bP)$, while the \emph{explainable} variant (Section \ref{sec:joint}) also uses the explanation model $q_\paramE(\bw|\by, \bP)$. 

Efficient training of the inference model relies on two design decisions. The first is a descriptive factorization of the output space $Y$, which we discuss in Section \ref{sec:output-factorization}. The second is using an informative \emph{belief prior} $p(\bP)$, which we discuss in Section \ref{sec:prior}. 

We first define the forward process that uses the symbolic function $\symfun$ to generate training data:
\begin{equation}
	\label{eq:forward}
	p(\bw, \by|\bP) = p(\bw|\bP) p(\by|\bw, \bP) = p(\bw|\bP) \mathbbm{1}_{\symfun(\bw)=\by}
\end{equation}
We take some example world $\bw$ and deterministically compute the output of the symbolic function $\symfun(\bw)$. Then, we compute whether the output $\symfun(\bw)$ equals $\by$. Therefore, $p(\bw, \by|\bP)$ is 0 if $\symfun(\bw) \neq \by$ (that is, $\bw$ is not a possible world of $\by$).

The belief prior $p(\bP)$ allows us to generate beliefs $\bP$ for the forward process. That is, we generate training data for the inference model using 
\begin{align}
	\label{eq:forward-gen}
	\begin{split}
	&p(\bP, \bw) = p(\bP) p(\bw|\bP) \\
	\text{where } &\bP, \bw \sim p(\bP, \bw), \quad \by = \symfun(\bw).
	\end{split}
\end{align}
The belief prior allows us to train the inference model with synthetic data: The prior and the forward process define everything $q_\bphi$ needs to learn. Note that the sampling process $\bP, \bw\sim p(\bP, \bw)$ is fast and parallelisable. It involves sampling from a Dirichlet distribution, and then sampling from a categorical distribution for each dimension of $\bw$ based on the sampled parameters $\bP$.

\subsubsection{Training the prediction-only variant}
\label{sec:pred-only}
\begin{figure}[tb]
	\begin{minipage}[t]{0.45\textwidth}
		\begin{algorithm}[H]
		\caption{Compute inference model loss}
		\label{alg:train-NIM}
		\begin{algorithmic}
		\State fit prior $p(\bP)$ on $\bP_1, ..., \bP_k$
		\State $\bP \sim p(\bP)$ 
		\State $\bw \sim p(\bw|\bP)$
		\State $\by \gets \symfun(\bw)$ 
		\State {\bfseries return} $\bphi + \alpha \nabla_\bphi\log q_\paramP(\by|\bP)$ 
		\end{algorithmic}
		\end{algorithm}
	\end{minipage}
	\hfill
	\begin{minipage}[t]{0.45\textwidth}
		\begin{algorithm}[H]
		\caption{\textsc{A-NeSI} training loop}
		\label{alg:train-A-NeSI}
		\begin{algorithmic}
		\State {\bfseries Input:} dataset $\mathcal{D}$, params $\btheta$, params $\bphi$
		\State \texttt{beliefs}$\gets []$
		\While {not converged}
		\State $(\bx, \by) \sim \mathcal{D}$
		\State $\bP \gets f_\btheta(\bx)$ 
		\State update \texttt{beliefs} with $\bP$
		\State $\bphi \gets$ {\bfseries Algorithm 1}(\texttt{beliefs}, $\bphi$)
		\State $\btheta \gets \btheta + \alpha \nabla_\btheta \log q_\paramP(\by|\bP)$
		\EndWhile
		\end{algorithmic}
		\end{algorithm}
\end{minipage}
\caption{The training loop of \textsc{A-NeSI}.}
\label{fig:training_loop}
\end{figure}
	
In the prediction-only variant, we only train the prediction model $q_\paramP(\by|\bP)$. We use the samples generated by the process in Equation \ref{eq:forward-gen}. We minimize the expected cross entropy between $p(\by|\bP)$ and $q_\paramP(\by|\bP)$ over the prior $p(\bP)$:
\begin{align}
	\label{eq:pred-only-loss}
	\mathcal{L}_{Pred}(\paramP) = & -\log q_\paramP(\symfun(\bw)|\bP), \quad \bP, \bw \sim p(\bP, \bw)
\end{align}
See \ref{appendix:cross-entropy-derivation} for a derivation. In the loss function defined in Equation \ref{eq:pred-only-loss}, we estimate the expected cross entropy using samples from $p(\bP, \bw)$. We use the sampled world $\bw$ to compute the output $\by=\symfun(\bw)$ and increase its probability under $q_\paramP$. Importantly, we do not need to use any data to evaluate this loss function. We give pseudocode for the full training loop in Figure \ref{fig:training_loop}. 

\subsubsection{Output space factorization}
\label{sec:output-factorization}
The factorization of the output space $Y$ introduced in Section \ref{sec:problem-statement} is one of the key ideas that allow efficient learning in \textsc{A-NeSI}. We will illustrate this with MNISTAdd.
As the number of digits $N$ increases, the number of possible outputs (i.e., sums) is $2\cdot 10^N-1$. Without factorization, we would need an exponentially large output layer. We solve this by predicting the individual digits of the output so that we need only $N\cdot 10+2$ outputs similar to  \cite{aspisEmbed2SymScalableNeuroSymbolic2022}. Furthermore, recognizing a single digit of the sum is easier than recognizing the entire sum: for its rightmost digit, only the rightmost digits of the input are relevant. 
	
Choosing the right factorization is crucial when applying \textsc{A-NeSI}. A general approach is to take the CNF of the symbolic function and predict each clause's truth value. However, this requires grounding the formula, which can be exponential. Another option is to predict for what objects a universally quantified formula holds, which would be linear in the number of objects. 

\subsubsection{Belief prior design}
\label{sec:prior}
How should we choose the $\bP$s for which we train $q_{\paramP, \paramE}$? A naive method would use the perception model $f_\btheta$, sample some training data $\bx_1, ..., \bx_k\sim \mathcal{D}$ and train the inference model over $\bP_1=f_\btheta(\bx_1), ..., \bP_k=f_\btheta(\bx_k)$. However, this means the inference model is only trained on those $\bP$ occurring in the training data. Again, consider the Multi-digit MNISTAdd problem. For $N=15$, we have a training set of 2000 sums, while there are $2\cdot 10^{15}-1$ possible sums. By simulating many beliefs, the inference model sees a much richer set of inputs and outputs, allowing it to generalize.

A better approach is to fit a Dirichlet prior $p(\bP)$ on $\bP_1, ..., \bP_k$ that covers all possible combinations of numbers. We choose a Dirichlet prior since it is conjugate to the discrete distributions. For details, see Appendix \ref{appendix:dirichlet-prior}. During hyperparameter tuning, we found that the prior needs to be high entropy to prevent the inference model from ignoring the inputs $\bP$. Therefore, we regularize the prior with an additional term encouraging high-entropy Dirichlet distributions.


\subsubsection{Training the explainable variant}
\label{sec:joint}
The explainable variant uses both the prediction model $q_\paramP(\by|\bP)$ and the optional explanation model $q_\paramE(\bw|\by, \bP)$. 
When training the explainable variant, we use the idea that both factorizations of the joint should have the same probability mass, that is, $p(\bw, \by|\bP) = q_{\paramP, \paramE}(\bw, \by|\bP)$. 
To this end, we use a novel \emph{joint matching} loss inspired by the theory of GFlowNets \cite{bengioFlowNetworkBased2021b,bengioGFlowNetFoundations2022}, in particular, the trajectory balance loss introduced by \cite{malkinTrajectoryBalanceImproved2022} which is related to variational inference \cite{malkinGFlowNetsVariationalInference2023}. For an in-depth discussion, see Appendix \ref{appendix:loss}. The joint matching loss is essentially a regression of $q_{\paramP, \paramE}$ onto the true joint $p$ that we compute in closed form:
\begin{align}
\begin{split}
	\label{eq:joint-match}
	\mathcal{L}_{Expl}(\paramP, \paramE)=\left(\log \frac{q_{\paramP, \paramE}(\bw, \symfun(\bw)|\bP)}{p(\bw|\bP)}\right)^2, \quad \bP, \bw \sim p(\bP, \bw)
\end{split}
\end{align}
Here we use that $p(\bw, \symfun(\bw)|\bP)=p(\bw|\bP)$ since $c(\bw)$ is deterministic.
Like when training the prediction-only variant, we sample a random belief $\bP$ and world $\bw$ and compute the output $\by$. Then we minimize the loss function to \emph{match} the joints $p$ and $q_{\paramP, \paramE}$.  We further motivate the use of this loss in Appendix \ref{appendix:gflownets}. Instead of a classification loss like cross-entropy, the joint matching loss ensures $q_{\paramP, \paramE}$ does not become overly confident in a single prediction and allows spreading probability mass easier.

\subsection{Symbolic pruner}
\label{sec:symbolic-pruning}
\begin{figure*}
	\includegraphics[width=\textwidth]{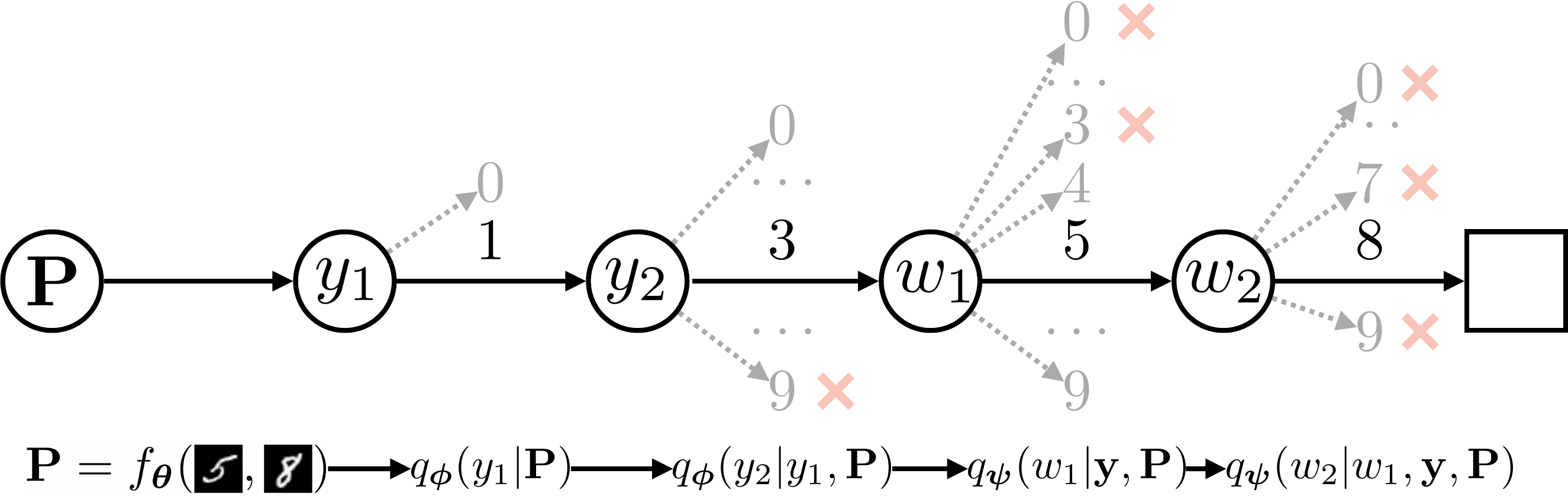}
	\caption{Example rollout sampling an explainable variant on input $\bx=(\digit{images/mnist_5.png}, \digit{images/mnist_8.png})$. For $y_2$, we prune option 9, as the highest attainable sum is $9+9=18$. For $w_1$, $\{0, ..., 3\}$ are pruned as there is no second digit to complete the sum. $w_2$ is deterministic given $w_1$, and prunes all branches but 8.}
	\label{fig:NIM_example}
\end{figure*}
An attractive option is to use symbolic knowledge to ensure the inference model only generates valid outputs. We can compute each factor $q_\paramE(w_i|\by, \bw_{1:i-1}, \bP)$ (both for world and output variables) using a neural network $\hat{q}_{\paramE, i}$ and a \emph{symbolic pruner} $\pruner_{\by, \bw_{1: i-1}}: W_i \rightarrow \{0, 1\}$:
\begin{align}
	q_{\paramE}(w_i=j|\mathbf{y}, \mathbf{w}_{1: i-1}, \mathbf{P})=\frac{\hat{q}_{\paramE}(w_i=j|\mathbf{y}, \mathbf{w}_{1: i-1}, \mathbf{P}) s_{\by, \bw_{1: i-1}}(j)}{\sum_{j'\in W_i} \hat{q}_{\paramE}(w_i=j'|\mathbf{y}, \mathbf{w}_{1: i-1}, \mathbf{P}) s_{\by, \bw_{1: i-1}}(j')}
\end{align}
The symbolic pruner sets the probability mass of certain choices for the variable $w_i$ to zero. Then, $q_\paramE$ is computed by renormalizing. If we know that by expanding $\bw_{1:i-1}$ it will be impossible to produce a possible world for $\by$, we can set the probability mass under that branch to 0: we will know that $p(\bw, \by)=0$ for all such branches. In Figure \ref{fig:NIM_example} we give an example for single-digit MNISTAdd. Symbolic pruning significantly reduces the number of branches our algorithm needs to explore during training. Moreover, symbolic pruning is critical in settings where verifiability and safety play crucial roles, such as medicine. We discuss the design of symbolic pruners $s$ in Appendix \ref{appendix:symbolic_pruner}.


\section{Experiments}
\label{sec:experiments}

We study three Neurosymbolic reasoning tasks to evaluate the performance and scalability of A-NeSI: Multi-digit MNISTAdd \cite{manhaeveApproximateInferenceNeural2021}, Visual Sudoku Puzzle Classification \cite{augustineVisualSudokuPuzzle2022} and Warcraft path planning. Code is available at \url{https://github.com/HEmile/a-nesi}.
We used the ADAM optimizer throughout. 

\textsc{A-NeSI} has two prediction methods. 1) \textbf{Symbolic prediction} uses the symbolic reasoning function $\symfun$ to compute the output: $\hat{\by}=\symfun({\arg\max}_{\bw} p(\bw|\bP=f_\btheta(\bx)))$. 2) \textbf{Neural prediction}  predicts with the prediction network $q_\paramP$ using a beam search: $\hat{\by}={\arg\max}_{\by} q_\paramP(\by|\bP=f_\btheta(\bx))$. 
In our studied tasks, neural prediction cannot perform better than symbolic prediction. However, we still use the prediction model to efficiently train the perception model in the symbolic prediction setting. We consider the prediction network adequately trained if it matches the accuracy of symbolic prediction.  

\subsection{Multi-digit MNISTAdd}
We evaluate \textsc{A-NeSI} on the Multi-Digit MNISTAdd task (Section \ref{sec:problem-statement}). For the perception model, we use the same CNN as in DeepProbLog \cite{manhaeveDeepProbLogNeuralProbabilistic2018}. 
The prediction model has $N+1$ factors $q_\paramP(y_i|\by_{1:i-1}, \bP)$, while the explanation model has $2N$ factors $q_\paramE(w_i|\by, \bw_{1, i-1}, \bP)$. We model each factor with a separate MLP. $y_i$ and $w_i$ are one-hot encoded digits, except for the first output digit $y_1$: it can only be 0 or 1. We used a shared set of hyperparameters for all $N$. For more details and a description of the baselines, see Appendix \ref{appendix:mnist_add}.

\begin{table*}
	\centering
	\begin{tabular}{ l | l l l l l }
		& N=1 & N=2 & N=4 & N=15\\
		\hline 
		 & \multicolumn{4}{c}{\textbf{Symbolic prediction}} \\
		LTN & 80.54 $\pm$ 23.33 & 77.54 $\pm$ 35.55 & T/O & T/O\\
		DeepProbLog & 97.20 $\pm$ 0.50 & 95.20 $\pm$ 1.70 & T/O & T/O\\
		DPLA*  & 88.90 $\pm$ 14.80 & 83.60 $\pm$ 23.70 & T/O & T/O\\
		DeepStochLog  & \textbf{97.90 $\pm$ 0.10} & \textbf{96.40 $\pm$ 0.10} & \textbf{92.70 $\pm$ 0.60} & T/O\\
		Embed2Sym  & 97.62 $\pm$ 0.29 & 93.81 $\pm$ 1.37 & 91.65 $\pm$ 0.57 & 60.46 $\pm$ 20.36\\
		\textsc{A-NeSI} (predict) & 97.66 $\pm$ 0.21 & 95.96 $\pm$ 0.38 & 92.56 $\pm$ 0.79 & 75.90 $\pm$ 2.21\\
		\textsc{A-NeSI} (explain) & 97.37 $\pm$ 0.32 & 96.04 $\pm$ 0.46 & 92.11 $\pm$ 1.06 & \textbf{76.84 $\pm$ 2.82}\\
		\textsc{A-NeSI} (pruning) & 97.57 $\pm$ 0.27 & 95.82 $\pm$ 0.33 & 92.40 $\pm$ 0.68 & 76.46 $\pm$ 1.39\\
		\textsc{A-NeSI} (no prior) & 76.54 $\pm$ 27.38 & 95.67 $\pm$ 0.53 & 44.58 $\pm$ 38.34 & 0.03 $\pm$ 0.09\\
		\hline 
		 & \multicolumn{4}{c}{\textbf{Neural prediction}} \\
		Embed2Sym  & 97.34 $\pm$ 0.19 & 84.35 $\pm$ 6.16 & 0.81 $\pm$ 0.12 & 0.00\\
		\textsc{A-NeSI} (predict) & \textbf{97.66 $\pm$ 0.21} & 95.95 $\pm$ 0.38 & \textbf{92.48 $\pm$ 0.76} & 54.66 $\pm$ 1.87\\
		\textsc{A-NeSI} (explain) & 97.37 $\pm$ 0.32 & \textbf{96.05 $\pm$ 0.47} & 92.14 $\pm$ 1.05 & \textbf{61.77 $\pm$ 2.37}\\
		\textsc{A-NeSI} (pruning) & 97.57 $\pm$ 0.27 & 95.82 $\pm$ 0.33 & 92.38 $\pm$ 0.64 & 59.88 $\pm$ 2.95\\
		\textsc{A-NeSI} (no prior) & 76.54 $\pm$ 27.01 & 95.28 $\pm$ 0.62 & 40.76 $\pm$ 34.29 & 0.00 $\pm$ 0.00\\
		\hline 
		Reference      & 98.01            & 96.06            & 92.27            & 73.97
		
	\end{tabular}
	\caption{Test accuracy of predicting the correct sum on the Multi-digit MNISTAdd task. ``T/O'' (timeout) represent computational timeouts. Reference accuracy approximates the accuracy of an MNIST predictor with $0.99$ accuracy using $0.99^{2N}$. 
	Bold numbers signify the highest average accuracy for some $N$ within the prediction categories.
	\emph{predict} is the prediction-only variant, \emph{explain} is the explainable variant, \emph{pruning} adds symbolic pruning (see Appendix \ref{appendix:MNISTAdd-pruner}), and \emph{no prior} is the prediction-only variant \emph{without} the prior $p(\bP)$.
 }
	\label{table:results}
\end{table*}

Table \ref{table:results} reports the accuracy of predicting the sum. For all $N$, \textsc{A-NeSI} is close to the reference accuracy, meaning there is no significant drop in the accuracy of the perception model as $N$ increases. For small $N$, it is slightly outperformed by DeepStochLog \cite{wintersDeepStochLogNeuralStochastic2022}, which can not scale to $N=15$. \textsc{A-NeSI} also performs slightly better than DeepProbLog, showing approximate inference does not hurt the accuracy.
With neural prediction, we get the same accuracy for low $N$, but there is a significant drop for $N=15$, meaning the prediction network did not perfectly learn the problem. However, compared to training a neural network without background knowledge (Embed2Sym with neural prediction), it is much more accurate already for $N=2$. Therefore, \textsc{A-NeSI}'s training loop allows training a prediction network with high accuracy on large-scale problems.

Comparing the different \textsc{A-NeSI} variants, we see the prediction-only, explainable and pruned variants perform quite similarly, with significant differences only appearing at $N=15$ where the explainable and pruning variants outperform the predict-only model, especially on neural prediction. 
However, when removing the prior $p(\bP)$, the performance degrades quickly. The prediction model sees much fewer beliefs $\bP$ than when sampling from a (high-entropy) prior $p(\bP)$. A second and more subtle reason is that at the beginning of training, all the beliefs $\bP=f_\btheta(\bx)$ will be uniform because the perception model is not yet trained. Then, the prediction model learns to ignore the input belief $\bP$. 

\begin{figure}
	\centering
	\includegraphics[width=0.95\textwidth]{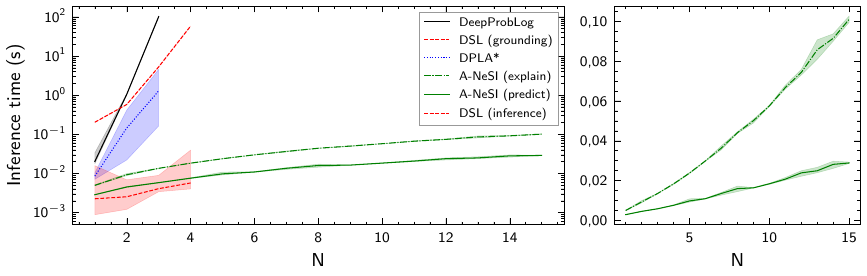}
        \vskip -3mm
	\caption{Inference time for a single input $\bx$ for different nr. of digits. The left plot uses a log scale, and the right plot a linear scale. DSL stands for DeepStochLog. We use the GM variant of DPLA*.  
	}
	\label{fig:timings}
\end{figure}

Figure \ref{fig:timings} shows the runtime for inference on a single input $\bx$. 
Inference in DeepProbLog (corresponding to exact inference) increases with a factor 100 as $N$ increases, and DPLA* (another approximation) is not far behind. Inference in DeepStochLog, which uses different semantics, is efficient due to caching but requires a grounding step that is exponential both in time and memory. We could not ground beyond $N=4$ because of memory issues. \textsc{A-NeSI} avoids having to perform grounding altogether: it scales slightly slower than linear. Furthermore, it is much faster in practice as parallelizing the computation of multiple queries on GPUs is trivial.

\subsection{Visual Sudoku Puzzle Classification}
In this task, the goal is to recognize whether an $N\times N$ grid of MNIST digits is a Sudoku. We have 100 examples of correct and incorrect grids from \cite{augustineVisualSudokuPuzzle2022}. We use the same MNIST classifier as in the previous section. We treat $\bw$ as a grid of digits in $\{1, \dots, N\}$ and designed an easy-to-learn representation of the label. For sudokus, all pairs of digits $\bw_i, \bw_j$ in a row, column, and block must differ. For each such pair, a dimension in $\by$ is 1 if different and 0 otherwise, and the symbolic function $\symfun$ computes these interactions from $\bw$. The prediction model is a single MLP that takes the probabilities for the digits $\bP_i$ and $\bP_j$ and predicts the probability that those represent different digits. For additional details, see Appendix \ref{appendix:visual_sudoku}.

Table \ref{table:visudo_results} shows the accuracy of classifying Sudoku puzzles. \textsc{A-NeSI} is the only method to perform better than random on $9\times 9$ sudokus. Exact inference cannot scale to $9\times 9$ sudokus, while we were able to run \textsc{A-NeSI} for 3000 epochs in 38 minutes on a single NVIDIA RTX A4000. 
\begin{table*}
	\centering
	\begin{tabular}{ l | l l }
		& N=4 & N=9 \\
		\hline 
		CNN & 51.50 $\pm$ 3.34 & 51.20 $\pm$ 2.20 \\
		Exact inference & 86.70 $\pm$ 0.50 & T/O \\
        NeuPSL & \textbf{89.7 $\pm$ 2.20} & 51.50 $\pm$ 1.37\\
		\textsc{A-NeSI} (symbolic prediction) & \textbf{89.70 $\pm$ 2.08} & \textbf{62.15 $\pm$ 2.08}\\
		\textsc{A-NeSI} (neural prediction) & \textbf{89.80 $\pm$ 2.11} & \textbf{62.25 $\pm$ 2.20}\\
	\end{tabular}
	\caption{Test accuracy of predicting whether a grid of numbers is a Sudoku. For A-NeSI, we used the prediction-only variant.}
	\label{table:visudo_results}
\end{table*}

\subsection{Warcraft Visual Path Planning}



\begin{figure}[ht]
	\centering
	\begin{minipage}[b]{0.2\textwidth}
		\includegraphics[width=\linewidth]{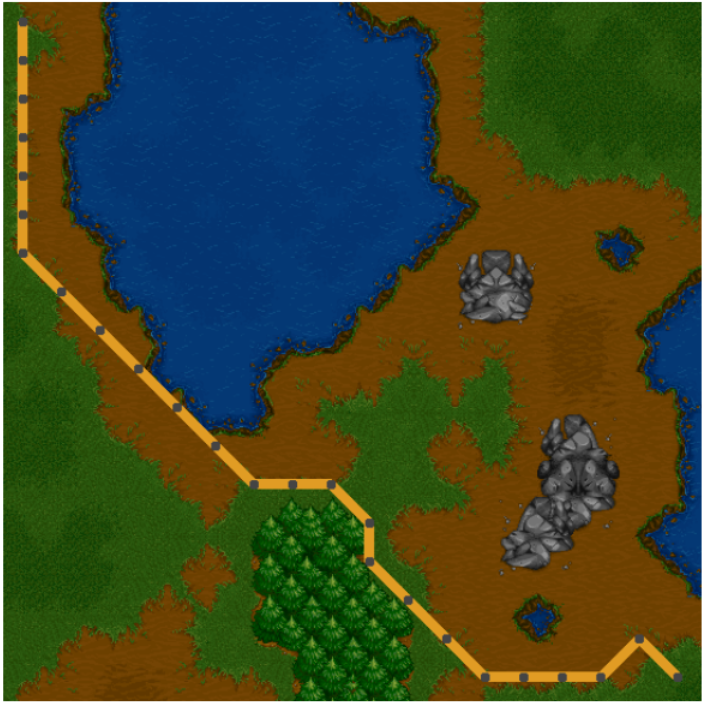}
		\caption{Example of the Warcraft task.}
		\label{fig:warcraft}
	\end{minipage}
	\hspace{0.03\textwidth}
	\begin{minipage}[b]{0.75\textwidth}

	\begin{table}[H]
	  \begin{tabular}{ l | l l }
	    & $12\times 12$ & $30 \times 30$ \\
	    \hline
	    ResNet18 \cite{pogancicDifferentiationBlackboxCombinatorial2020} & 23.0 $\pm$ 0.3 & 0.0 $\pm$ 0.0 \\
	    SPL \cite{ahmedSemanticProbabilisticLayers2022} & 78.2 & T/O \\
	    I-MLE \cite{niepertImplicitMLEBackpropagating2021} & 97.2 $\pm$ 0.5 & \textbf{93.7 $\pm$ 0.6} \\
	    \hline
	    RLOO \cite{koolBuyREINFORCESamples2019} & 43.75 $\pm$ 12.35 & 12.59 $\pm$ 16.38 \\
	    \textsc{A-NeSI} & 94.57 $\pm$ 2.27 & 17.13 $\pm$ 16.32\\
	    \textsc{A-NeSI} + RLOO & \textbf{98.96 $\pm$ 1.33} & 67.57 $\pm$ 36.76\\
	  \end{tabular}
	  \caption{Test accuracy of predicting the lowest-cost path on the Warcraft Visual Path Planning task. For A-NeSI, we used the prediction-only variant.}
	  \label{table:warcraft_results}
	\end{table}
	\end{minipage}
\end{figure}


The Warcraft Visual Path Planning task \cite{pogancicDifferentiationBlackboxCombinatorial2020} is to predict a shortest path from the top left corner of an $N\times N$ grid to the bottom right, given an image from the Warcraft game (Figure \ref{fig:warcraft}). The perception model has to learn the cost of each tile of the Warcraft image. We use a small CNN that takes the tile $i, j$ (a $3\times 8 \times 8$ image) and outputs a distribution over 5 possible costs. The symbolic function $c(\bw)$ is Dijkstra's algorithm and returns a shortest path $\by$, which we encode as a sequence of the 8 (inter)-cardinal directions starting from the top-left corner. We use a ResNet18 \cite{heDeepResidualLearning2016} as the prediction model, which we train to predict the next direction given beliefs of the tile costs $\bP$ and the current location. We pretrain the prediction model on a fixed prior and train the perception model with a frozen prediction model. See Appendix \ref{appendix:path_planning} for additional details.

Table \ref{table:warcraft_results} presents the accuracy of predicting a shortest path. \textsc{A-NeSI} is competitive on $12 \times 12$ grids but struggles on $30 \times 30$ grids. We believe this is because the prediction model is not accurate enough, resulting in gradients with too high bias. Still, \textsc{A-NeSI} finds short paths, is far better than a pure neural network, and can scale to $30\times 30$ grids, unlike SPL \cite{ahmedSemanticProbabilisticLayers2022}, which uses exact inference. Since both SPL and I-MLE \cite{niepertImplicitMLEBackpropagating2021} have significantly different setups (see Appendix \ref{appendix:path_planning_other_methods}), we added experiments using REINFORCE with the leave-one-out baseline (RLOO, \cite{koolBuyREINFORCESamples2019}) that we implemented with the Storchastic PyTorch library \cite{vankriekenStorchasticFrameworkGeneral2021}. We find that RLOO has high variance in its performance. Since RLOO is unbiased with high variance and \textsc{A-NeSI} is biased with no variance, we also tried running \textsc{A-NeSI} and RLOO simultaneously. Interestingly, this is the best-performing method on the $12 \times 12$ grid, and has competitive performance on $30\times 30$ grids, albeit with high variance (6 out 10 runs get to an accuracy between 93.3\% and 98.5\%, while the other runs are stuck around 25\% accuracy).

\section{Related work}
\textsc{A-NeSI} can approximate multiple PNL methods \cite{deraedtNeurosymbolicNeuralLogical2019}. DeepProbLog \cite{manhaeveDeepProbLogNeuralProbabilistic2018} performs symbolic reasoning by representing $\bw$ as ground facts in a Prolog program. It enumerates all possible proofs of a query $\by$ and weights each proof by $p(\bw|\bP)$. NeurASP \cite{yangNeurASPEmbracingNeural2020} is a PNL framework closely related to DeepProbLog, but is based on Answer Set Programming \cite{brewkaAnswerSetProgramming2011}. 
Some methods consider constrained structured output prediction \cite{giunchigliaDeepLearningLogical2022}. In Appendix \ref{appendix:background-knowledge}, we discuss extending \textsc{A-NeSI} to this setting. Semantic Loss \cite{xuSemanticLossFunction2018} improves learning 
with a loss function but 
does not guarantee that formulas are satisfied at test time. Like \textsc{A-NeSI}, Semantic Probabilistic Layers \cite{ahmedSemanticProbabilisticLayers2022} solves this with a layer that performs constrained prediction. 
These approaches perform exact inference using probabilistic circuits (PCs) \cite{yoojungProbabilisticCircuitsUnifying2020}. 
Other methods perform approximate inference by only considering the top-k proofs in PCs \cite{manhaeveNeuralProbabilisticLogic2021,huangScallopProbabilisticDeductive2021}. However, finding those proofs is hard, especially when beliefs have high entropy, and limiting to top-k significantly reduces performance. 
Other work considers MCMC approximations \cite{liSoftenedSymbolGrounding2023}. Using neural networks for approximate inference ensures computation time is constant and independent of the entropy of $\bP$ or long MCMC chains.  


Other neurosymbolic methods use fuzzy logics \cite{badreddineLogicTensorNetworks2022,diligentiSemanticbasedRegularizationLearning2017,danieleRefiningNeuralNetwork2023a,giunchigliaMultiLabelClassificationNeural2021a}, which are faster than PNL with exact inference. Although traversing ground formulas is linear time, the grounding is itself often exponential \cite{manhaeveApproximateInferenceNeural2021}, so the scalability of fuzzy logics often fails to deliver. \textsc{A-NeSI} is polynomial in the number of ground atoms and does not traverse the ground formula. Furthermore, background knowledge 
is often not fuzzy \cite{vankriekenAnalyzingDifferentiableFuzzy2022,grespanEvaluatingRelaxationsLogic2021}, and fuzzy semantics does not preserve classical equivalence. 


\textsc{A-NeSI} performs gradient estimation of the WMC problem. We can extend our method to biased but zero-variance gradient estimation by learning a distribution over function outputs (see Appendix \ref{appendix:gradient-estimation}). Many recent works consider continuous relaxations of discrete computation to make them differentiable \cite{petersenLearningDifferentiableAlgorithms2022,jangCategoricalReparameterizationGumbelsoftmax2017} but require many tricks to be computationally feasible. Other methods compute MAP states to compute the gradients \cite{niepertImplicitMLEBackpropagating2021,blondelEfficientModularImplicit2022,sahooBackpropagationCombinatorialAlgorithms2023} but are restricted to integer linear programs. 
The score function (or `REINFORCE') gives unbiased yet high-variance gradient estimates \cite{mohamedMonteCarloGradient2020,vankriekenStorchasticFrameworkGeneral2021}. Variance reduction techniques, such as memory augmentation \cite{liangMemoryAugmentedPolicy2018} and leave-one-out baselines \cite{koolBuyREINFORCESamples2019}, exist to reduce this variance. 

 



\section{Conclusion, Discussion and Limitations}
We introduced \textsc{A-NeSI}, a scalable approximate method for probabilistic neurosymbolic learning. We demonstrated that \textsc{A-NeSI} scales to combinatorially challenging tasks without losing accuracy.
\textsc{A-NeSI} 
can be extended to include explanations and hard constraints without loss of performance. 

However, there is no `free lunch': when is \textsc{A-NeSI} a good approximation? 
We discuss three aspects of learning tasks that could make it difficult to learn a strong and efficient inference model.
\begin{itemize}
\item \textbf{Dependencies of variables.} When variables in world $\bw$ are highly dependent, finding an informative prior is hard. We suggest then using a prior that can incorporate dependencies such as a normalizing flow \cite{rezendeVariationalInferenceNormalizing2015,chenVariationalLossyAutoencoder2022} or deep generative models \cite{tomczakDeepGenerativeModeling2022} over a Dirichlet distribution.
\item \textbf{Structure in symbolic reasoning function.} We studied reasoning tasks with a relatively simple structure. Learning the inference model will be more difficult when the symbolic function $\symfun$ is less structured. Studying the relation between structure and learnability is interesting future work. 
\item \textbf{Problem size.} \textsc{A-NeSI} did not perfectly train the prediction model in more challenging problems, which is evident from the divergence in performance between symbolic and neural prediction for 15 digits in Table \ref{table:results}. 
 We expect its required parameter size and training time to increase with the problem size.
 \end{itemize}

Promising future avenues are 
studying if the explanation model produces helpful explanations \cite{shterionovMostProbableExplanation2015}, 
extensions to continuous random variables \cite{gutmannExtendingProbLogContinuous2011} (see Appendix \ref{appendix:gradient-estimation} for an example), and 
extensions to 
unnormalized distributions such as Markov Logic Networks\cite{richardsonMarkovLogicNetworks2006}, as well as (semi-) automated \textsc{A-NeSI} solutions for neurosymbolic programming languages like DeepProbLog \cite{manhaeveDeepProbLogNeuralProbabilistic2018}. 

\section*{Acknowledgements}
We want to thank Robin Manhaeve, Giusseppe Marra, Thomas Winters, Yaniv Aspis, Robin Smet, and Anna Kuzina for helpful discussions. For our experiments, we used the DAS-6 compute cluster \cite{balMediumscaleDistributedSystem2016}. We also thank SURF (www.surf.nl) for its support in using the Lisa Compute Cluster.

\bibliographystyle{plainnat}
\bibliography{references}

\newpage

\appendix
\section{Derivation of prediction-only variant loss}
\label{appendix:cross-entropy-derivation}
\begin{align}
	&\mathbb{E}_{p(\bP)}\left[-\mathbb{E}_{p(\by|\bP)}[\log q_\paramP(\by|\bP)]\right]  \\
	\label{eq:pred-only-deriv}
	= &-\mathbb{E}_{p(\bP)}\left[\mathbb{E}_{p(\bw, \by|\bP)}[\log q_\paramP(\by|\bP)]\right] \\
	=&-\mathbb{E}_{p(\bP, \bw)}\left[\log q_\paramP(\symfun(\bw)|\bP)\right]  \\
	\mathcal{L}_{Pred}(\paramP) = & -\log q_\paramP(\symfun(\bw)|\bP), \quad \bP, \bw \sim p(\bP, \bw)
\end{align}
In line \ref{eq:pred-only-deriv}, we marginalize out $\bw$, and use the fact that $\by$ is deterministic given $\bw$. 

\section{Constrained structured output prediction}
\label{appendix:background-knowledge}
We consider how to implement the constrained structured output prediction task considered in (for example) \cite{ahmedNeuroSymbolicEntropyRegularization2022, ahmedSemanticProbabilisticLayers2022,xuSemanticLossFunction2018} in the \textsc{A-NeSI} framework. Here, the goal is to learn a mapping of some $X$ to a structured \emph{output} space $W$, where we have some constraint $\symfun(\bw)$ that returns 1 if the background knowledge holds, and 0 otherwise. We can model the constraints using $Y=\{0, 1\}$; that is, the `output' in our problem setup is whether $\bw$ satisfies the background knowledge $\symfun$ or not. We give an example of this setting in Figure \ref{fig:flow_aorb}.

Then, we design the inference model as follows. 1) $q_\paramP(y|\bP)$ is tasked with predicting the probability that randomly sampled outputs $\bw\sim p(\bw|\bP)$ will satisfy the background knowledge. 2) $q_\paramE(\bw|y=1, \bP)$ is an approximate posterior over structured outputs $\bw$ that satisfy the background knowledge $c$. 

This setting changes the interpretation of the set $W$ from \emph{unobserved} worlds to \emph{observed} outputs. We will train our perception module using a ``strongly'' supervised learning loss where $\bx, \bw \sim \mathcal{D_L}$: 
\begin{equation}
	\mathcal{L}_{Perc}(\btheta)=-\log q_\paramE(\bw|y=1, \bP=f_\btheta(\bx)).
\end{equation}
If we also have unlabeled data $\mathcal{D}_U$, we can use the prediction model to ensure the perception model gives high probabilities for worlds that satisfy the background knowledge. This approximates Semantic Loss \cite{xuSemanticLossFunction2018}: Given $\bx \sim \mathcal{D}_U$,
\begin{equation}
	\mathcal{L}_{SL}(\btheta)=-\log q_\paramP(y=1| \bP=f_\btheta(\bx)).
\end{equation}
That is, we have some input $\bx$ for which we have no labelled output. Then, we increase the probability that the belief $\bP$ the perception module $f_\theta$ predicts for $\bx$ would sample structured outputs $\bw$ that satisfy the background knowledge.

Training the inference model in this setting can be challenging if the problem is very constrained. Then, random samples $\bP, \bw\sim p(\bP, \bw)$ will usually not satisfy the background knowledge. Since we are only in the case that $y=1$, we can choose to sample from the inference model $q_\bphi$ and exploit the symbolic pruner to obtain samples that are guaranteed to satisfy the background knowledge. Therefore, we modify equation \ref{eq:joint-match} to the \emph{on-policy joint matching loss}
\begin{align}
	\begin{split}
		\label{eq:joint-match-on-policy}
		\mathcal{L}_{Expl}(\bP, \paramP, \paramE)=\mathbb{E}_{q_\paramE(\bw|y=1,\bP)}\left[\left(\log \frac{q_{\paramP, \paramE}(\bw,y=1|\bP)}{p(\bw|\bP)}\right)^2\right]
	\end{split}
\end{align}
Here, we incur some sampling bias by not sampling structured outputs from the true posterior, but this bias will reduce as $q_\bphi$ becomes more accurate. We can also choose to combine the on- and off-policy losses. Another option to make learning easier is using the suggestions of Section \ref{sec:output-factorization}: factorize $y$ to make it more fine-grained. 


\section{\textsc{A-NeSI} as a Gradient Estimation method}
\label{appendix:gradient-estimation}
In this appendix, we discuss using the method \textsc{A-NeSI} introduced for general gradient estimation \cite{mohamedMonteCarloGradient2020}. We first define the gradient estimation problem. Consider some neural network $f_\btheta$ that predicts the parameters $\bP$ of a distribution over unobserved variable $\bz\in Z$: $p(\bz|\bP=f_\btheta(\bx))$. This distribution corresponds to the distribution over worlds $p(\bw|\bP)$ in \textsc{A-NeSI}. Additionally, assume we have some deterministic function $g(\bz)$ that we want to maximize in expectation. This maximization requires estimating the gradient
\begin{equation}
	\nabla_\btheta \mathbb{E}_{p(\bz|\bP=f_\btheta(\bx))}[g(\bz)].
\end{equation}
Common methods for estimating this gradient are reparameterization \cite{kingmaAutoencodingVariationalBayes2014}, which only applies to continuous random variables and differentiable $r$, and the score function \cite{mohamedMonteCarloGradient2020,schulmanGradientEstimationUsing2015} which has notoriously high variance. 

Instead, our gradient estimation method learns an \emph{inference model} $q_\bphi(r|\bP)$ to approximate the distribution of outcomes $r = g(\bz)$ for a given $\bP$. In \textsc{A-NeSI}, this is the prediction network $q_\paramP(\by|\bP)$ that estimates the WMC problem of Equation \ref{eq:wmc}. Approximating a \emph{distribution} over outcomes is similar to the idea of distributional reinforcement learning \cite{bellemareDistributionalPerspectiveReinforcement2017}. Our approach is general: Unlike reparameterization, we can use inference models in settings with discrete random variables $\bz$ and non-differentiable downstream functions $g$. 

We derive the training loss for our inference model similar to that in Section \ref{sec:pred-only}. First, we define the joint on latents $\bz$ and outcomes $r$ like the joint process in \ref{eq:forward-gen} as $p(r, \bz|\bP)=p(\bz|\bP) \cdot \delta_{g(z)}(r)$, where $\delta_{g(z)}(r)$ is the dirac-delta distribution that checks if the output of $g$ on $\bz$ is equal to $r$. Then we introduce a prior over distribution parameters $p(\bP)$, much like the prior over beliefs in \textsc{A-NeSI}. An obvious choice is to use a prior conjugate to $p(\bz|\bP)$. We minimize the expected KL-divergence between $p(r|\bP)$ and $q_\bphi(r|\bP)$:
\begin{align}
	&\mathbb{E}_{p(\bP)}[D_{KL}(p||q_\bphi)] \\
	= &\mathbb{E}_{p(\bP)}\left[\mathbb{E}_{p(r|\bP)}[\log q_\bphi(r|\bP)]\right] + C \\
	= &\mathbb{E}_{p(\bP)}\left[\int_{\mathbb{R}} p(r|\bP) \log q_\bphi(r|\bP)] dr \right] + C
\end{align}
Next, we marginalize over $\bz$, dropping the constant:
\begin{align}
	&\mathbb{E}_{p(\bP)}\left[\int_Z\int_{\mathbb{R}} p(r, \bz|\bP)\log q_\bphi(r|\bP)] dr d\bz \right]  \\
	= &\mathbb{E}_{p(\bP)}\left[\int_Z p(\bz|\bP) \int_{\mathbb{R}} \delta_{g(\bz)}(r) \log q_\bphi(r|\bP)] dr d\bz \right]  \\
	= &\mathbb{E}_{p(\bP)}\left[\int_Z p(\bz|\bP) \log q_\bphi(g(\bz)|\bP)] d\bz \right]  \\
	=&\mathbb{E}_{p(\bP, \bz)}\left[\log q_\bphi(g(\bz)|\bP)\right] 
\end{align}
This gives us a negative-log-likelihood loss function similar to Equation \ref{eq:pred-only-loss}.
\begin{equation}
	\mathcal{L}_{Inf}(\bphi)= -\log q_\bphi(g(\bz)|\bP), \quad \bP, \bz\sim p(\bz,\bP)
\end{equation}
where we sample from the joint $p(\bz, \bP)=p(\bP) p(\bz|\bP)$.

We use a trained inference model to get gradient estimates: 
\begin{equation}
	\label{eq:A-NeSI-grad-estim}
	\nabla_\bP \mathbb{E}_{p(\bz|\bP)}[g(\bz)]\approx \nabla_\bP \mathbb{E}_{q_\bphi(r|\bP)}[r]
\end{equation}
We use the chain rule to update the parameters $\btheta$. This requires a choice of distribution $q_\bphi(r|\bP)$ for which computing the mean $\mathbb{E}_{q_\bphi(r|\bP)}[r]$ is easy. The simplest option is to parameterize $q_\bphi$ with a univariate normal distribution. We predict the mean and variance using a neural network with parameters $\bphi$. For example, a neural network $m_\bphi$ would compute $\mu=m_\bphi(\bP)$. Then, the mean parameter is the expectation on the right-hand side of Equation \ref{eq:A-NeSI-grad-estim}. The loss function for $f_\btheta$ with this parameterization is:
\begin{equation}
	\mathcal{L}_{NN}(\btheta)= - m_\bphi(f_\btheta(\bx)), \quad \bx\sim \mathcal{D}
\end{equation}
Interestingly, like \textsc{A-NeSI}, this gives zero-variance gradient estimates! Of course, bias comes from the error in the approximation of $q_\bphi$. 

Like \textsc{A-NeSI}, we expect the success of this method to rely on the ease of finding a suitable prior over $\bP$ to allow proper training of the inference model. See the discussion in Section \ref{sec:prior}. We also expect that, like in \textsc{A-NeSI}, it will be easier to train the inference model if the output $r=g(\bz)$ is structured instead of a single scalar. We refer to Section \ref{sec:output-factorization} for this idea. Other challenges might be stochastic and noisy output measurements of $r$ and non-stationarity of $g$, for instance, when training a VAE \cite{kingmaAutoencodingVariationalBayes2014}.

\section{\textsc{A-NeSI} and GFlowNets}
\label{appendix:gflownets}
\begin{figure*}
	\includegraphics[width=\textwidth]{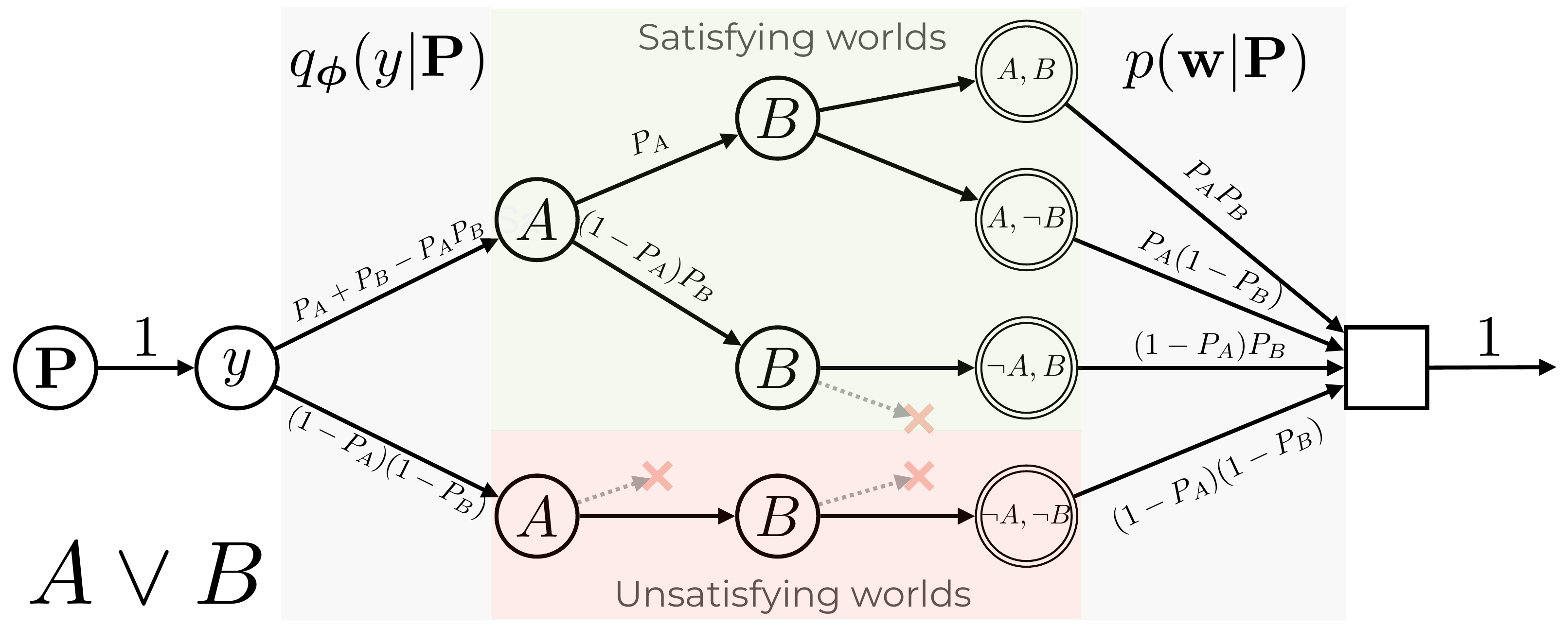}
	\caption{The tree flow network corresponding to weighted model counting on the formula $A\vee B$. Following edges upwards means setting the corresponding binary variable to true (and to false by following edges downwards). We first choose probabilities for the propositions $A$ and $B$, then choose whether we want to sample a world that satisfies the formula $A\vee B$. $y=1$ is the WMC of $A\vee B$, and equals its outgoing flow $P_A+P_B-P_AP_B$. Terminal states (with two circles) represent choices of the binary variables $A$ and $B$. These are connected to a final sink node, corresponding to the prior over worlds $p(\bw|\bP)$. The total ingoing and outgoing flow to this network is 1, as we deal with normalized probability distributions $p$ and $q_\bphi$.}
	\label{fig:flow_aorb}
\end{figure*}
A-NeSI is heavily inspired by the theory of GFlowNets \cite{bengioFlowNetworkBased2021b,bengioGFlowNetFoundations2022}, and we use this theory to derive our loss function. In the current section, we discuss these connections and the potential for future research by taking inspiration from the GFlowNet literature. In this section, we will assume the reader is familiar with the notation introduced in \cite{bengioGFlowNetFoundations2022} and refer to this paper for the relevant background. 

\subsection{Tree GFlowNet representation}
\label{appendix:gflownet-tree}
The main intuition is that we can treat the inference model $q_\bphi$ in Equation \ref{eq:nim} as a `trivial' GFlowNet. We refer to Figure \ref{fig:flow_aorb} for an intuitive example. It shows what a flow network would look like for the formula $A\vee B$. We take the reward function $R(\bw, \by)=p(\bw, \by)$. We represent states $s$ by $s=(\bP, \by_{1:i}, \bw_{1:j})$, that is, the belief $\bP$, a list of some dimensions of the output instantiated with a value and a list of some dimensions of the world assigned to some value. Actions $a$ set some value to the next output or world variable, i.e., $A(s)=Y_{i+1}$ or $A(s)=W_{j+1}$. 

Note that this corresponds to a flow network that is a tree everywhere but at the sink since the state representation conditions on the whole trajectory observed so far. We demonstrate this in Figure \ref{fig:flow_aorb}. We assume there is some fixed ordering on the different variables in the world, which we generate the value of one by one. Given this setup, Figure \ref{fig:flow_aorb} shows that the branch going up from the node $y$ corresponds to the regular weighted model count (WMC) introduced in Equation \ref{eq:wmc}.  

The GFlowNet forward distribution $P_F$ is $q_\bphi$ as defined in Equation \ref{eq:nim}. The backward distribution $P_B$ is $p(\bw, \by|\bP)$ at the sink node, which chooses a terminal node. Then, since we have a tree, this determines the complete trajectory from the terminal node to the source node. Thus, at all other states, the backward distribution is deterministic.
Since our reward function $R(\bw, \by, \bP)=p(\bw, \by|\bP)$ is normalized, we trivially know the partition function $Z(\bP)=\sum_\bw\sum_\by R(\bw, \by|\bP)=1$.

\subsection{Lattice GFlowNet representation}
\begin{figure*}
	\includegraphics[width=\textwidth]{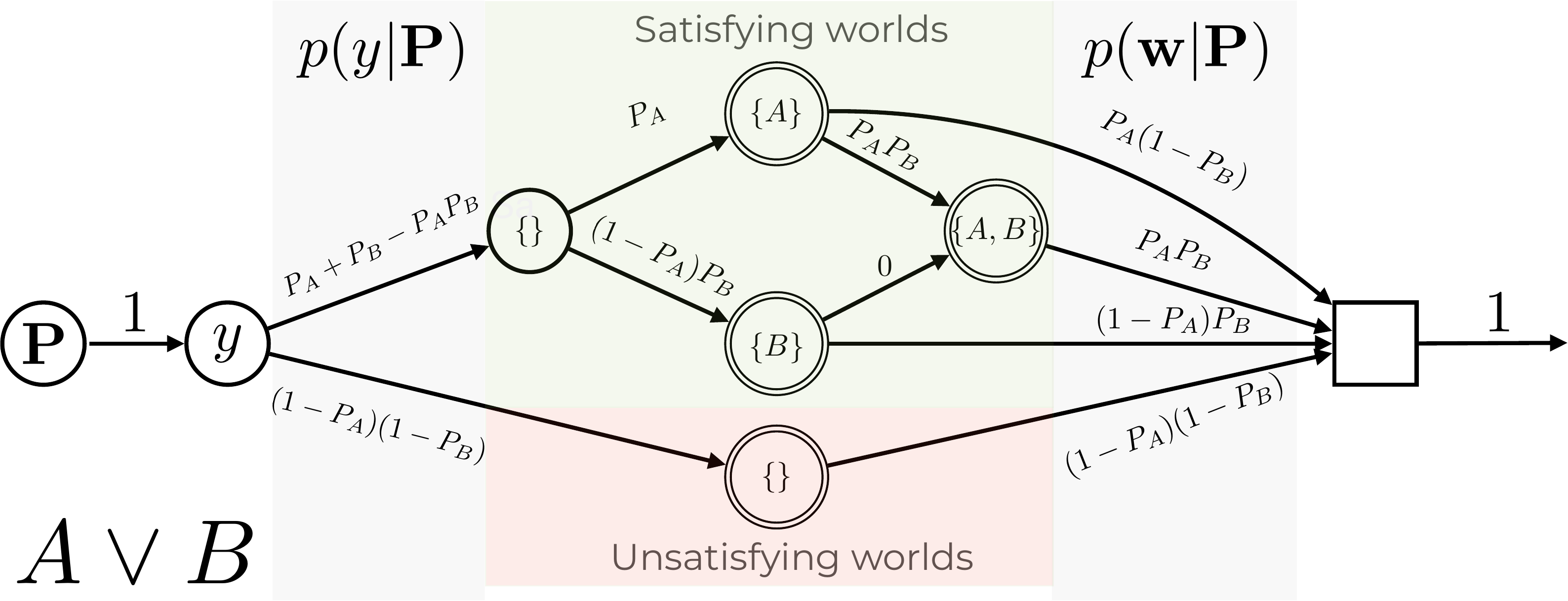}
	\caption{The lattice flow network corresponding to weighted model counting on the formula $A\vee B$. In this representation, nodes represent sets of included propositions. Terminal states represent sets of random variables such that $A\vee B$ is true given $y=1$, or false otherwise.}
	\label{fig:lattice_flow}
\end{figure*}

Our setup of the generative process assumes we are generating each variable in the world in some order. This is fine for some problems like MNISTAdd, where we can see the generative process as `reading left to right'. For other problems, such as Sudoku, the order in which we would like to generate the symbols is less obvious. Would we generate block by block? Row by row? Column by column? Or is the assumption that it needs to be generated in some fixed order flawed by itself?

In this section, we consider a second GFlowNet representation for the inference model that represents states using sets instead of lists. We again refer to Figure \ref{fig:lattice_flow} for the resulting flow network of this representation for $A\vee B$. We represent states using $s=(\bP, \{y_i\}_{i\in I_Y}, \{w_i\}_{i\in I_W})$, where $I_Y\subseteq \{1, ..., \Ydim\}$ and $I_W\subseteq \{1, ..., \Wdim\}$ denote the set of variables for which a value is chosen. The possible actions from some state correspond to $A(s)=\bigcup_{i\not\in I_W} W_i$ (and analogous for when $\by$ is not yet completely generated). For each variable in $W$ for which we do not have the value yet, we add its possible values to the action space. 

With this representation, the resulting flow network is no longer a tree but a DAG, as the order in which we generate the different variables is now different for every trajectory. What do we gain from this? When we are primarily dealing with categorical variables, the two gains are 1) we no longer need to impose an ordering on the generative process, and 2) it might be easier to implement parameter sharing in the neural network that predicts the forward distributions, as we only need a single set encoder that can be reused throughout the generative process. 

However, the main gain of the set-based approach is when worlds are all (or mostly) binary random variables. We illustrate this in Figure \ref{fig:lattice_flow}. Assume $W=\{0, 1\}^\Wdim$. Then we can have the following state and action representations: $s=(\bP, \by, I_W)$, where $I_W\subseteq \{1, ..., \Wdim\}$ and $A(s) = \{1, ..., \Wdim\}\setminus I_W$. The intuition is that $I_W$ contains the set of all binary random variables that are set to 1 (i.e., true), and $\{1, ..., \Wdim\}\setminus I_W$ is the set of variables set to 0 (i.e., false). The resulting flow network represents a \emph{partial order} over the set of all subsets of $\{1, ..., \Wdim\}$, which is a \emph{lattice}, hence the name of this representation.

With this representation, we can significantly reduce the size and computation of the flow network required to express the WMC problem. As an example, compare Figures \ref{fig:flow_aorb} and \ref{fig:lattice_flow}, which both represent the WMC of the formula $A\vee B$. We no longer need two nodes in the branch $y=0$ to represent that we generate $A$ and $B$ to be false, as the initial empty set $\{\}$ already implies they are. This will save us two nodes. Similarly, we can immediately stop generating at $\{A\}$ and $\{B\}$, and no longer need to generate the other variable as false, which also saves a computation step.

While this is theoretically appealing, the three main downsides are 1) $P_B$ is no longer trivial to compute; 2) we have to handle the fact that we no longer have a tree, meaning there is no longer a unique optimal $P_F$ and $P_B$; and 3) parallelization becomes much trickier. We leave exploring this direction in practice for future work. 

\section{Analyzing the Joint Matching Loss}
\label{appendix:loss}
This section discusses the loss function we use to train the joint variant in Equation \ref{eq:joint-match}. We recommend interested readers first read Appendix \ref{appendix:gflownet-tree}. Throughout this section, we will refer to $p:=p(\bw, \by|\bP)$ (Equation \ref{eq:forward}) and $q:=q_{\paramP, \paramE}(\bw, \by|\bP)$ (Equation \ref{eq:reverse}). We again refer to \cite{bengioGFlowNetFoundations2022} for notational background.

\subsection{Trajectory Balance}
We derive our loss function from the recently introduced Trajectory Balance loss for GFlowNets, which is proven to approximate the true Markovian flow when minimized. This means sampling from the GFlowNet allows sampling in proportion to reward $R(s_n)=p$. The Trajectory Balance loss is given by
\begin{equation}
	\mathcal{L}(\tau) = \left( \log \frac{F(s_0)\prod_{t=1}^n P_F(s_t|s_{t-1})}{R(s_n)\prod_{t=1}^n P_B(s_{t-1}|s_t)} \right)^2,
\end{equation}
where $s_0$ is the source state, in our case $\bP$, and $s_n$ is some terminal state that represents a full generation of $\by$ and $\bw$. In the tree representation of GFlowNets for inference models (see Appendix \ref{appendix:gflownet-tree}), this computation becomes quite simple:
\begin{enumerate}
	\item $F(s_0)=1$, as $R(s_n)=p$ is normalized;
	\item $\prod_{t=1}^n P_F(s_t|s_{t-1})=q$: The forward distribution corresponds to the inference model $q_\bphi(\bw, \by|\bP)$;
	\item $R(s_n)=p$, as we define the reward to be the true joint probability distribution $p(\bw, \by|\bP)$;
	\item $\prod_{t=1}^n P_B(s_{t-1}|s_t)=1$, since the backward distribution is deterministic in a tree.
\end{enumerate}
Therefore, the trajectory balance loss for (tree) inference models is 
\begin{equation}
	\mathcal{L}(\bP, \by, \bw) = \left(\log \frac{q}{p}\right)^2=\left(\log q - \log p \right)^2,
\end{equation}
i.e., the term inside the expectation of the joint matching loss in Equation \ref{eq:joint-match}. This loss function is stable because we can sum the individual probabilities in log-space. 

A second question might then be how we obtain `trajectories' $\tau=(\bP, \by, \bw)$ to minimize this loss over. The paper on trajectory balance \cite{malkinTrajectoryBalanceImproved2022} picks $\tau$ \emph{on-policy}, that is, it samples $\tau$ from the forward distribution (in our case, the inference model $q_{\paramP, \paramE}$). We discussed when this might be favorable in our setting in Appendix \ref{appendix:background-knowledge} (Equation \ref{eq:joint-match-on-policy}). However, the joint matching loss as defined in Equation \ref{eq:joint-match} is \emph{off-policy}, as we sample from $p$ and not from $q_{\paramP, \paramE}$.

\subsection{Relation to common divergences}
These questions open quite some design space, as was recently noted when comparing the trajectory balance loss to divergences commonly used in variational inference \cite{malkinGFlowNetsVariationalInference2023}. Redefining $P_F=q$ and $P_B=p$, the authors compare the trajectory balance loss with the KL-divergence and the reverse KL-divergence and prove that 
\begin{equation}
	\nabla_\bphi D_{KL}(q||p)=\frac{1}{2}\mathbb{E}_{\tau \sim q}[\nabla_\bphi \mathcal{L}(\tau)].
\end{equation}
That is, the \emph{on-}policy objective minimizes the \emph{reverse} KL-divergence between $p$ and $q$. We do not quite find such a result for the \emph{off-}policy version we use for the joint matching loss in Equation \ref{eq:joint-match}:
\begin{align}
	\nabla_\bphi D_{KL}(p||q)&=-\mathbb{E}_{\tau \sim p}[\nabla_\bphi \log q] \\
	\mathbb{E}_{\tau \sim p}[\nabla_\bphi \mathcal{L}(\tau)] &= -2\mathbb{E}_{\tau \sim p}[(\log p - \log q)\nabla_\bphi \log q]
\end{align}
So why do we choose to minimize the joint matching loss rather than the (forward) KL divergence directly? This is because, as is clear from the above equations, it takes into account how far the `predicted' log-probability $\log q$ currently is from $\log p$. That is, given a sample $\tau$, if $\log p < \log q$, the joint matching loss will actually \emph{decrease} $\log q$. Instead, the forward KL will increase the probability for every sample it sees, and whether this particular sample will be too likely under $q$ can only be derived through sampling many trajectories. 

Furthermore, we note that the joint matching loss is a `pseudo' f-divergence with $f(t) = t \log^2 t$ \cite{malkinGFlowNetsVariationalInference2023}. It is not a true f-divergence since $t\log^2 t$ is not convex. A related well-known f-divergence is the Hellinger distance given by
\begin{equation}
	H^2(p||q) = \frac{1}{2}\mathbb{E}_{\tau \sim p}[\left(\sqrt{p}-\sqrt{q}\right)^2].
\end{equation}
This divergence similarly takes into account the distance between $p$ and $q$ in its derivatives through squaring. However, it is much less stable than the joint matching loss since both $p$ and $q$ are computed by taking the product over many small numbers. Computing the square root over this will be much less numerically stable than taking the logarithm of each individual probability and summing.

Finally, we note that we minimize the on-policy joint matching $\mathbb{E}_{q}[(\log p - \log q)^2]$ by taking derivatives $\mathbb{E}_{q}[\nabla_{\paramP, \paramE}(\log p - \log q)^2]$. This is technically not minimizing the joint matching, since it ignores the gradient coming from sampling from $q$.

\section{Dirichlet prior}
\label{appendix:dirichlet-prior}
This section describes how we fit the Dirichlet prior $p(\bP)$ used to train the inference model. During training, we keep a dataset of the last 2500 observations of $\bP=f_\btheta(\bx)$. We have to drop observations frequently because $\btheta$ changes during training, meaning that the empirical distribution over $\bP$s changes as well. 

We perform an MLE fit on $\Wdim$ independent Dirichlet priors to get parameters $\balpha$ for each. The log-likelihood of the Dirichlet distribution cannot be found in closed form \cite{minkaEstimatingDirichletDistribution2000}. However, since its log-likelihood is convex, we run ADAM \cite{kingmaAdamMethodStochastic2017} for 50 iterations with a learning rate of 0.01 to minimize the negative log-likelihood. We refer to \cite{minkaEstimatingDirichletDistribution2000} for details on computing the log-likelihood and alternative options. Since the Dirichlet distribution accepts positive parameters, we apply the softplus function on an unconstrained parameter during training. We initialize all parameters at 0.1. 

We added L2 regularization on the parameters. This is needed because at the beginning of training, all observations $\bP = f_\btheta(\bx)$ represent uniform beliefs over digits, which will all be nearly equal. Therefore, fitting the Dirichlet on the data will give increasingly higher parameter values, as high parameter values represent low-entropy Dirichlet distributions that produce uniform beliefs. When the Dirichlet is low-entropy, the inference models learn to ignore the input belief $\bP$, as it never changes. The L2 regularization encourages low parameter values, which correspond to high-entropy Dirichlet distributions.


\section{Designing symbolic pruners}
\label{appendix:symbolic_pruner}
We next discuss four high-level approaches for designing the optional symbolic pruner, each with differing tradeoffs in accuracy, engineering time and efficiency:
\begin{enumerate}
	\item \textbf{Mathematically derive efficient solvers.} For simple problems, we could mathematically derive an exact solver. One example of an efficient symbolic pruner, along with a proof for exactness, is given for Multi-digit MNISTAdd in Appendix \ref{appendix:MNISTAdd-pruner}. This pruner is linear-time. However, for most problems we expect the pruner to be much more computationally expensive. 
	\item \textbf{Use SAT-solvers.} Add the sampled symbols $\by$ and $\bw_{1:i}$ to a CNF-formula, and ask an SAT-solver if there is an extension $\bw_{i+1:\Wdim}$ that satisfies the CNF-formula. SAT-solvers are a general approach that will work with every function $\symfun$, but using them comes at a cost. 
	
	The first is that we would require grounding the logical representation of the problem. Furthermore, to do SAT-solving, we have to solve a linear amount of NP-hard problems. However, competitive SAT solvers can deal with substantial problems due to years of advances in their design \cite{balyoProceedingsSATCompetition2022}, and a linear amount of NP-hard calls is a lower complexity class than \#P hard. Using SAT-solvers will be particularly attractive in problem settings where safety and verifiability are critical.
	\item \textbf{Prune with local constraints.} In many structured prediction tasks, we can use local constraints of the symbolic problem to prune paths that are guaranteed to lead to branches that can never create possible worlds.  However, local constraints do not guarantee that each non-pruned path contains a possible world, but this does not bias the inference model, as the neural network will (eventually) learn when an expansion would lead to an unsatisfiable state. 
	
	One example is the shortest path problem, where we filter out directions that would lead outside the $N\times N$ grid, or that would create cycles (See Appendix \ref{appendix:path_planning}). However, this only ensures we find \emph{a} path, but does not ensure it is the shortest one. 
	\item \textbf{Learn the pruner.} Finally, we can learn the pruner, that is, we can train a neural network to learn satisfiability checking. One possible approach is to reuse the inference model trained on the belief $\bP$ that uniformly distributes mass over all worlds.  
	
	Learned pruners will be as quick as regular inference models, but are less accurate than symbolic pruners and will not guarantee that constraints are always satisfied during test-time. We leave experimenting with learning the pruner for future work.
\end{enumerate}

\section{MNISTAdd Symbolic Pruner}
\label{appendix:MNISTAdd-pruner}
In this section, we describe a symbolic pruner for the Multi-digit MNISTAdd problem, which we compute in time linear to $N$. Note that $\bw_{1:N}$ represents the first number and $\bw_{N+1:2N}$ the second. We define $n_1 = \sum_{i=1}^{N} w_{i} \cdot 10^{N-i - 1}$ and $n_2 = \sum_{i=1}^{N} w_{N+i} \cdot 10^{N-i - 1}$ for the integer representations of these numbers, and $y=\sum_{i=1}^{N+1}y_i \cdot 10^{N-i}$ for the sum label encoded by $\by$. We say that partial generation $\bw_{1:k}$ has a \emph{completion} if there is a $\bw_{k+1:2N}\in \{0, \dots, 9\}^{2N-k}$ such that $n_1+n_2=y$. 

\begin{proposition}
	For all $N\in \mathbb{N}$, $\by\in \{0, 1\} \times \{0, \dots, 9\}^{N}$ and partial generation $\bw_{1:k-1} \in \{0, \dots, 9\}^k$ with $k\in \{1, \dots, 2N\}$, the following algorithm rejects all $w_k$ for which $\bw_{1:k}$ has no completions, and accepts all $w_k$ for which there are:
	\begin{itemize}
		\item $k\leq N$: Let $l_k = \sum_{i=1}^{k+1} y_k \cdot 10^{k+1-i}$ and $p_k = \sum_{i=1}^{k} w_k\cdot 10^{k-i}$. Let $S=1$ if $k=N$ or if the $(k+1)$th to $(N+1)$th digit of $y$ are all 9, and $S=0$ otherwise. We compute two boolean conditions for all $w_k\in\{0, \dots, 9\}$: 
		\begin{equation}
			\label{eq:mnist-constraint}
		0 \leq l_k -p_k \leq 10^{k} - S
		\end{equation}
		We reject all $w_k$ for which either condition does not hold.
		\item $k>N$: Let $n_2=y - n_1$. We reject all $w_k \in \{0, \dots, 9\}$ different from $w_k= \lfloor \frac{n_2} {10^{N-k-1}}\rfloor \bmod 10$, and reject all $w_k$ if $n_2<0$ or $n_2 \geq 10^N$.
	\end{itemize}
\end{proposition}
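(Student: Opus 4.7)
The plan is to prove soundness and completeness of the pruner for the two cases $k \leq N$ and $k > N$ by characterizing exactly when a completion of $\bw_{1:k}$ exists, then matching that characterization against the stated test. I would begin by fixing notation: write $y = l_k \cdot 10^{N-k} + r_y$, where $r_y \in [0,\, 10^{N-k}-1]$ is the trailing block of $y$ beyond position $k+1$, and $n_1 = p_k \cdot 10^{N-k} + r_1$, where $r_1 \in [0,\, 10^{N-k}-1]$ is the (still to be chosen) contribution of $w_{k+1}, \ldots, w_N$. A completion then exists iff there is a pair $(r_1, n_2)$ with $r_1 \in [0,\, 10^{N-k}-1]$ and $n_2 \in [0,\, 10^{N}-1]$ solving $r_1 + n_2 = (l_k - p_k)\cdot 10^{N-k} + r_y$.

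For $k \leq N$: the sum $r_1 + n_2$ attains every integer value in $[0,\, 10^N + 10^{N-k} - 2]$ as $r_1$ and $n_2$ vary, so completion is possible iff $M := (l_k - p_k)\cdot 10^{N-k} + r_y$ lies in this interval. Since $r_y \leq 10^{N-k}-1$, the condition $M \geq 0$ collapses to $l_k \geq p_k$. The upper bound is automatic whenever $l_k - p_k \leq 10^k - 1$, while the borderline case $l_k - p_k = 10^k$ additionally requires $r_y \leq 10^{N-k}-2$, that is, that the trailing digits of $y$ beyond position $k+1$ are not all $9$. This edge case is exactly what $S$ tracks: $S = 1$ either when $k = N$ (so $r_y$ is an empty sum, and the strict inequality $l_k - p_k \leq 10^N - 1$ is forced by $n_2 \leq 10^N - 1$), or when the relevant tail of $y$ is all nines. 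Consequently the algorithm's test $0 \leq l_k - p_k \leq 10^k - S$ is equivalent to completability, and the analysis is done uniformly in the candidate $w_k$ since $p_k$ depends on it.

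For $k > N$: the first $N$ digits $\bw_{1:N}$ are already fixed, so $n_1$ is fully determined. Any completion must satisfy $n_2 = y - n_1$; if this falls outside $[0,\, 10^N - 1]$ no completion exists, which is precisely when the algorithm rejects every $w_k$. Otherwise $n_2$ is valid, its digits are uniquely determined, and the only accepted $w_k$ is the corresponding digit of $n_2$ (namely the $(k-N)$th digit from the left, given by $\lfloor n_2/10^{2N-k} \rfloor \bmod 10$ once one corrects the apparent off-by-one in the exponent of the stated formula).

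The main obstacle is the boundary analysis around $l_k - p_k = 10^k$ in the $k \leq N$ case: verifying that $S$ captures exactly the edge condition, and in particular that the degenerate case $k = N$ (where $r_y$ is empty, $10^{N-k} = 1$, and $l_k = y$, $p_k = n_1$) collapses to the $n_2$-range constraint $0 \leq y - n_1 \leq 10^N - 1$, thereby forming a seamless hand-off between the two sub-cases of the algorithm. A secondary nuisance is reconciling the evident typos in the proposition's exponents (in the definition of $n_2$ and in the $k>N$ formula for $w_k$) with the intended place-value arithmetic; I would state the intended definitions explicitly up front and proceed from those.
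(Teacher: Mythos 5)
Your proposal is correct and follows essentially the same route as the paper's proof: the same place-value decomposition of $y$ into $l_k$ times $10^{N-k}$ plus a trailing block (your $r_y$, the paper's $a_k$), the same case analysis on $S$, and the same reduction of the $k>N$ case to $n_2$ being determined by linearity. The only difference is presentational — you package both directions into a single equivalence by noting that $r_1+n_2$ sweeps out the contiguous interval $[0,\,10^N+10^{N-k}-2]$, where the paper argues rejection and acceptance separately via extremal completions — and you additionally (and correctly) flag the off-by-one typos in the stated exponents for $n_1$, $n_2$, and the $k>N$ digit formula.
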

\begin{proof}
	For $k>N$, we note that $n_2$ is fixed given $y$ and $n_1$ through linearity of summation, and we only consider $k\leq N$. We define $a_k = \sum_{i=k+2}^{N+1}y_i\cdot 10^{N+1-i}$ as the sum of the remaining digits of $y$. We note that $y = l_k\cdot 10^{N-k} + a_k$. 

%

	\textbf{Algorithm rejects $w_k$ without completions} We first show that our algorithm only rejects $w_k$ for which no completion exists. We start with the constraint $0\leq l_k - p_k$, and show that whenever this constraint is violated (i.e., $p_k > l_k$), $\bw_{1:k}$ has no completion. Consider the smallest possible completion of $\bw_{k+1:N}$: setting each to 0. Then $n_1=p_k\cdot 10^{N-k}$. First, note that 
	\begin{equation*}
		10^{N-k} > 10^{N-k}-1\geq a_k
	\end{equation*}
	Next, add $l_k\cdot 10^{N-k}$ to both sides
	\begin{equation*}
		(l_k + 1)\cdot 10^{N-k} > l_k\cdot 10^{N-k} + a_k =y
	\end{equation*}
	By assumption, $p_k$ is an integer upper bound of $l_k$ and so $p_k \geq l_k+1$. Therefore,
	\begin{equation*}
		n_1=p_k\cdot 10^{N-k} > y
	\end{equation*}

	Since $n_1$ is to be larger than $y$, $n_2$ has to be negative, which is impossible.

	Next, we show the necessity of the second constraint. Assume the constraint is unnecessary, that is, $l_k > p_k + 10^k - S$. Consider the largest possible completion $\bw_{k+1: N}$ by setting each to 9. Then 
	\begin{align*} 
		n_1 &= p_k \cdot 10^{N-k} + 10^{N-k}-1\\
		&=(p_k+1)\cdot 10^{N-k} - 1
	\end{align*}
	We take $n_2$ to be the maximum value, that is, $n_2=10^N-1$. Therefore, 
	\begin{equation*}
		n_1+n_2 = 10^N - (p_k + 1)\cdot 10^{N-k} - 2
	\end{equation*}
	We show that $n_1 + n_2 < y$. Since we again have an integer upper bound, we know $l_k \geq p_k + 10^k - S + 1$. Therefore, 
	\begin{align*}
		y &\geq (p_k + 1 +10^k - S) 10^{N-k} + a_k\\
		&\geq n_1 + n_2 + 2 - S \cdot 10^{N-k} + a_k
	\end{align*}
	There are two cases. 
	\begin{itemize}
		\item $S=0$. Then $a_k < 10^{N-k} - 1$, and so 
		\begin{align*}
			y \geq n_1 + n_2 +2 + a_k > n_1 + n_2.
		\end{align*}
		\item $S=1$. Then $a_k = 10^{N-k} - 1$, and so 
		\begin{align*}
			y \geq n_1 + n_2 + 1 > n_1 + n_2.
		\end{align*}
	\end{itemize}

	\textbf{Algorithm accepts $w_k$ with completions} Next, we show that our algorithm only accepts $w_k$ with completions. Assume Equation \ref{eq:mnist-constraint} holds, that is, $0\leq l_k - p_k \leq 10^k - S$. We first consider all possible completions of $\bw_{1:k}$. Note that $p_k\cdot 10^{N-k}\leq n_1 \leq p_k\cdot 10^{N-k}+10^{N-k}-1$ and $0\leq n_2 \leq 10^N-1$, and so
	\begin{equation*}
		p_k\cdot 10^{N-k}\leq n_1 + n_2 \leq (p_k + 1)\cdot 10^{N-k} + 10^N -2.
	\end{equation*}
	Similarly,
	\begin{equation*}
		l_k\cdot 10^{N-k} \leq y \leq (l_k + 1)\cdot 10^{N-k} - 1.
	\end{equation*}
	By assumption, $p_k \leq l_k$, so $p_k\cdot 10^{N-k}\leq l_k\cot 10^{N-k}$. For the upper bound, we again consider two cases. We use the second condition $l_k \leq 10^k +p_k - S$:
	\begin{itemize}
		\item $S=0$. Then (since there are no trailing 9s), 
		\begin{align*}
		y &\leq (l_k + 1) \cdot 10^{N-k} - 2\\
		&\leq (10^k+p_k + 1) \cdot 10^{N-k}-1\\
		&=(p_k + 1) \cdot 10^{N-k} + 10^N - 2.
		\end{align*} 
		\item $S=1$. Then with trailing 9s, 
		\begin{align*}
			y&=(l_k + 1)\cdot 10^{N-k} - 1\\
			&\leq (10^k + p_k)\cdot 10^{N-k} - 1\\ 
			&= p_k\cdot 10^{N-k} + 10^N - 1 \\
			&\leq (p_k + 1)\cdot 10^{N-k} + 10^N - 2, 
		\end{align*}
		since $10^{N-k}\geq 1$. 
	\end{itemize}
	Therefore, 
	\begin{equation*}
		p_k\cdot 10^{N-k}\leq y \leq (p_k + 1)\cdot 10^{N-k} + 10^N -2
	\end{equation*}
	and so there is a valid completion.
	
\end{proof}

\section{Details of the experiments}

\subsection{Multi-digit MNISTAdd}
\label{appendix:mnist_add}
Like \cite{manhaeveNeuralProbabilisticLogic2021,manhaeveApproximateInferenceNeural2021}, we take the MNIST \cite{lecunMNISTHandwrittenDigit2010} dataset and use each digit exactly once to create data. We follow \cite {manhaeveNeuralProbabilisticLogic2021} and require more unique digits for increasing $N$. Therefore, the training dataset will be of size $60000/2N$ and the test dataset of size $10000/2N$. 

\subsubsection{Hyperparameters}
We performed hyperparameter tuning on a held-out validation set by splitting the training data into 50.000 and 10.000 digits, and forming the training and validation sets from these digits. We progressively increased $N$ from $N=1$, $N=3$, $N=4$ to $N=8$ during tuning to get improved insights into what hyperparameters are important. The most important parameter, next to learning rate, is the weight of the L2 regularization on the Dirichlet prior's parameters which should be very high. We used ADAM \cite{kingmaAdamMethodStochastic2017} throughout. We ran each experiment 10 times to estimate average accuracy, where each run computes 100 epochs over the training dataset. We used Nvidia RTX A4000s GPUs and 24-core AMD EPYC-2 (Rome) 7402P CPUs.  

We give the final hyperparameters in Table \ref{table:hyperparameters}. We use this same set of hyperparameters for all $N$. \# of samples refers to the number of samples we used to train the inference model in Algorithm \ref{alg:train-NIM}. For simplicity, it is also the beam size for the beam search at test time. The hidden layers and width refer to MLP that computes each factor of the inference model. There is no parameter sharing. The perception model is fixed in this task to ensure performance gains are due to neurosymbolic reasoning (see \cite{manhaeveDeepProbLogNeuralProbabilistic2018}).

\begin{table}
	\centering
	\begin{tabular}{l | l || l | l}
		Parameter name & Value & Parameter name & Value\\
		\hline
		Learning rate & 0.001 & Prior learning rate & 0.01 \\
		Epochs & 100 & Amount beliefs prior & 2500 \\
		Batch size & 16 & Prior initialization & 0.1 \\
		\# of samples & 600 & Prior iterations & 50 \\
		Hidden layers & 3 & L2 on prior & 900.000 \\
		Hidden width & 800 & & \\
	\end{tabular}
	\caption{Final hyperparameters for the multi-digit MNISTAdd task.}
	\label{table:hyperparameters}
\end{table}

\subsubsection{Other methods}
\label{appendix:other_methods}
We compare with multiple neurosymbolic frameworks that previously tackled the MNISTAdd task. Several of those are probabilistic neurosymbolic methods: DeepProbLog \cite{manhaeveDeepProbLogNeuralProbabilistic2018}, DPLA* \cite{manhaeveApproximateInferenceNeural2021}, NeurASP \cite{yangNeurASPEmbracingNeural2020} and NeuPSL \cite{pryorNeuPSLNeuralProbabilistic2022}. We also compare with the fuzzy logic-based method LTN \cite{badreddineLogicTensorNetworks2022} and with Embed2Sym \cite{aspisEmbed2SymScalableNeuroSymbolic2022} and DeepStochLog \cite{wintersDeepStochLogNeuralStochastic2022}. We take results from the corresponding papers, except for DeepProbLog and NeurASP, which are from \cite{manhaeveApproximateInferenceNeural2021}, and LTN from \cite{pryorNeuPSLNeuralProbabilistic2022}\footnote[1]{We take the results of LTN from \cite{pryorNeuPSLNeuralProbabilistic2022} because \cite{badreddineLogicTensorNetworks2022} averages over the 10 best outcomes of 15 runs and overestimates its average accuracy.}. We reran Embed2Sym, averaging over 10 runs since its paper did not report standard deviations. We do not compare DPLA* with pre-training because it tackles an easier problem where part of the digits is labeled.

Embed2Sym \cite{aspisEmbed2SymScalableNeuroSymbolic2022} uses three steps to solve Multi-digit MNISTAdd: First, it trains a neural network to embed each digit and to predict the sum from these embeddings. It then clusters the embeddings and uses symbolic reasoning to assign clusters to labels. \textsc{A-NeSI} has a similar neural network architecture, but we train the prediction network on an objective that does not require data. Furthermore, we train \textsc{A-NeSI} end-to-end, unlike Embed2Sym. For Embed2Sym, we use \textbf{symbolic prediction} to refer to Embed2Sym-NS, and \textbf{neural prediction} to refer to Embed2Sym-FN, which also uses a prediction network but is only trained on the training data given and does not use the prior to sample additional data

We believe the accuracy improvements compared to DeepProbLog to come from hyperparameter tuning and longer training times, as \textsc{A-NeSI} approximates DeepProbLog's semantics. 

\subsection{Visual Sudoku Puzzle Classification}
\label{appendix:visual_sudoku}
\begin{table}
	\centering
	\begin{tabular}{l | l || l | l}
		Parameter name & Value & Parameter name & Value\\
		\hline
		Perception Learning rate & 0.00055 & Prior learning rate & 0.0029 \\
		Inference learning rate & 0.003 & Amount beliefs prior & 2500 \\
		Batch size & 20 & Prior initialization & 0.02 \\
		\# of samples & 500 & Prior iterations & 18 \\
		Hidden layers & 2 & L2 on prior & 2.500.000 \\
		Hidden width & 100 & &  \\
		Epochs & 5000 & Pretraining epochs & 50
	\end{tabular}
	\caption{Final hyperparameters for the visual Sudoku puzzle classification task.}
	\label{table:hyperparameters_visudo}
\end{table}

\subsubsection{A-NeSI definition}
First, we will be more precise with the model we use. We see $\bx$ as a $N \times N\times 784$ grid of MNIST images, and beliefs $\bP$ as an $N\times N \times N$ grid of distributions over $N$ options (for example, for a $4\times 4$ puzzle, we have to fill in the digits $\{0, 1, 2, 3\}$). For each grid index $i, j$, the world variable $\bw_{i, j}$ corresponds to the digit at location $(i, j)$. For correct puzzles, we know that the digits at location $(i, j)$ and location $(i', j')$ need to be different if $i=i'$, $j=j'$ or if $(i, j)$ is in the same block as $(i', j')$. For each pair of locations $(i, j), (i', j')$ for which this holds, we have a dimension in $\by$ that indicates if the digits at that grid location are indeed different. The symbolic function $\symfun$ considers each such pair and returns the corresponding $\by$. 

For the prediction model, we use a \emph{single} MLP $f_\btheta$. That is, for each pair that should be different, we compute $q_\paramP(\by_k|\bP)=f_\paramP(\bP_{i, j}, \bP_{i', j'})$. This introduces the independence assumption that the digits at location $(i, j)$ and location $(i', j')$ being different does not depend on the digits at other locations. This is, clearly, wrong. However, we found it is sufficient to accurately train the perception model. 

When training the prediction model, since we sample $\bP$ from a Dirichlet prior that assumes the different dimensions of $\bw$ are independent, the grid of digits $\bw$ are highly unlikely to represent actual sudoku's: There are about $10^21$ Sudoku's and $9^81$ possible grids (for $9\times 9$ Sudoku's). However, it is quite likely to sample two digits that are different, and this is enough to train the prediction model.

\subsubsection{Hyperparameters and other methods}
We used the Visual Sudoku Puzzle Classification dataset from \cite{augustineVisualSudokuPuzzle2022}. This dataset offers many options: We used the simple generator strategy with 200 training puzzles (100 correct, 100 incorrect). We took a corrupt chance of 0.50, and used the dataset with 0 overlap (this means each MNIST digit can only be used once in the 200 puzzles). There are 11 splits of this dataset, independently generated. We did hyperparameter tuning on the 11th split of the $9\times 9$ dataset. We used the other 10 splits to evaluate the results, averaging results over runs of each of those. 

The final hyperparameters are reported in Table \ref{table:hyperparameters_visudo}. The 5000 epochs took on average 20 minutes for the $4\times 4$ puzzles, and 38 minutes for the $9\times 9$ puzzles on a machine with a single NVIDIA RTX A4000. The first 50 epochs we only trained the prediction model to ensure it provides reasonably accurate gradients. 

While \cite{augustineVisualSudokuPuzzle2022} used NeuPSL, we had to rerun it to get accuracy results and results on $9 \times 9$ grids. 

We implemented the exact inference methods using what can best be described as Semantic Loss \cite{xuSemanticLossFunction2018}. We encoded the rules of Sudoku described at the beginning of this section as a CNF, and used PySDD (\url{https://github.com/wannesm/PySDD}) to compile this to an SDD \cite{kisaProbabilisticSententialDecision2014}. This was almost instant for the $4\times 4$ CNF, but we were not able to compile the $9\times 9$ CNF within 4 hours, hence why we report a timeout for exact inference. To implement Semantic Loss, we modified a PyTorch implementation available at \url{https://github.com/lucadiliello/semantic-loss-pytorch} to compute in log-space for numerically stable behavior. This modified version is included in our own repository. We ran this method for 300 epochs with a learning rate of 0.001. We ran this method for fewer epochs because it is much slower than A-NeSI even on $4\times 4$ puzzles (1 hour and 16 minutes for those 300 epochs, so about 63 times as slow).

For both \textsc{A-NeSI} and exact inference, we train the perception model on \emph{correct} puzzles by maximizing the probability that $p(y=1|\bP)$. \textsc{A-NeSI} does this by maximizing $\log q_\paramP(\by=\mathbf{1}|\bP)$, while Semantic Loss uses PSDDs to exactly compute $\log p(y=1|\bP)$. For \emph{incorrect} puzzles, there is not much to be gained since we cannot assume anything about $\by$. Still, for both methods we added the loss $-\log (1-p(y=1|\bP))$ for the incorrect puzzles.

\subsection{Warcraft Visual Path Planning}
\label{appendix:path_planning}
\subsubsection{A-NeSI definition}
We see $\bx$ as a $N\times N \times 3 \times 8 \times 8$ real tensor: The first two dimensions indicate the different grid cells, the third dimension indicates the RGB color channels, and the last two dimensions indicate the pixels in each grid cell. The world $\bw$ is an $N\times N$ grid of integers, where each integer indexes five different costs for traversing that cell. The five costs are $[0.8, 1.2, 5.3, 7.7, 9.2]$, and correspond to the five possible costs in the Warcraft game. The symbolic function $\symfun$ takes the grid of costs $\bw$ and returns the shortest path from the top left corner $(1, 1)$ to the bottom right corner $(N, N)$ using Dijkstra's algorithm. We encode the shortest path as a sequence of actions to take in the grid, where each action is one of the eight (inter-)cardinal directions (down, down-right, right, etc.). The sequence is padded with the do-not-move action to allow for batching. 

For the perception model, we use a single small CNN $f_\btheta$ for each of the $N\times N$ grid cells. That is, for each grid cell, we compute $\bP_{i, j}=f_\btheta(\bx_{i, j})$. The CNN has a single convolutional layer with 6 output dimensions, a $2\times 2$ maxpooling layer, a hidden layer of $24\times 84$ and a softmax output layer of $24\times 5$, with ReLU activations.

The prediction model is a ResNet18 model \cite{heDeepResidualLearning2016}, with an output layer of 8 options. It takes an image of size $6\times N \times N$ as input. The first 5 channels are the probabilities $\bP_{i, j}$, and the last channel indicates the current position in the grid. The 8 output actions correspond to the 8 (inter-)cardinal directions. We apply symbolic pruning (Section \ref{sec:symbolic-pruning}) to prevent actions that would lead outside the grid or return to a previously visited grid cell. We pretrained the prediction model by repeating Algorithm \ref{alg:train-NIM} on a fixed prior using 185.000 iterations (200 samples each) for $12\times 12$, and 370.000 iterations (20 samples) for $30\times 30$. We used fewer examples per iteration for the larger grid because Dijkstra's algorithm became a computational bottleneck. This took 23 hours for $12 \times 12$ and 44 hours for $30 \times 30$. Both used a learning rate of $2.5\cdot 10^{-4}$ and an independent fixed Dirichlet prior with $\alpha=0.005$. Standard deviations over 10 runs are reported over multiple perception model training runs on the same frozen pretrained prediction model. We trained the perception model for only 1 epoch using a learning rate of 0.0084 and a batch size of 70. 
\subsubsection{Other methods}
\label{appendix:path_planning_other_methods}
We compare to SPL \cite{ahmedSemanticProbabilisticLayers2022} and I-MLE \cite{niepertImplicitMLEBackpropagating2021}. SPL is also a probabilistic neurosymbolic method, and uses exact inference. Its setup is quite different from ours, however. Instead of using Dijkstra's algorithm, it trains a ResNet18 to predict the shortest path end-to-end, and uses symbolic constraints to ensure the output of the ResNet18 is a valid path. Furthermore, it only considers the 4 cardinal directions instead of all 8 directions. SPL only reports a single training rule in their paper.

I-MLE is more similar to our setup and also uses Dijkstra's algorithm. It uses the first five layers of a ResNet18 to predict the cell costs given the input image. One big difference to our setup is that I-MLE uses continuous costs instead of a choice out of five discrete costs. This may be easier to optimize, as it gives the model more freedom to move costs around. I-MLE is reported using the numbers from the paper, and averages over 5 runs. 

To be able to compare to another scalable baseline with the same setup, we added REINFORCE using the leave-one-out baseline (RLOO, \cite{koolBuyREINFORCESamples2019}), implemented using the PyTorch library Storchastic \cite{vankriekenStorchasticFrameworkGeneral2021}. It uses the same small CNN to predict a distribution over discrete cell costs, then takes 10 samples, and feeds those through Dijkstra's to get the shortest path. Here, we represent the shortest path as an $N\times N$ grid of zeros and ones. The reward function for RLOO is the Hamming loss between the predicted path and the ground truth path. We use a learning rate of $5\cdot 10^{-4}$ and a batch size of 70. We train for 10 epoch and report the standard deviation over 10 runs. We note that RLOO gets quite expensive for $30\times 30$ grids, as it needs 10 Dijkstra calls per training sample. 

Finally, we experimented with running \textsc{A-NeSI} and RLOO simultaneously. We ran this for 10 epochs with a learning rate of $5\cdot 10^{-4}$ and a batch size of 70. We report the standard deviation over 10 runs.

\end{document}